\theoremstyle{plain}
\newtheorem{theorem}{Theorem}[section]
\newtheorem{proposition}[theorem]{Proposition}
\newtheorem{lemma}[theorem]{Lemma}
\newtheorem{corollary}[theorem]{Corollary}
\theoremstyle{definition}
\newtheorem{definition}[theorem]{Definition}
\theoremstyle{remark}
\newcommand{\bfx}{\mathbf{x}}
\newcommand{\bfz}{\mathbf{z}}
\newcommand{\bfr}{\mathbf{r}}
\newcommand{\bfu}{\mathbf{u}}
\newcommand{\bfa}{\mathbf{a}}
\newcommand{\bft}{\mathbf{t}}
\newcommand{\bff}{\mathbf{f}}
\newcommand{\bfy}{\mathbf{y}}
\newcommand{\bfK}{\mathbf{K}}
\newcommand{\bfI}{\mathbf{I}}
\newcommand{\bfC}{\mathbf{C}}
\newcommand{\bfA}{\mathbf{A}}
\newcommand{\bfD}{\mathbf{D}}
\newcommand{\bfS}{\mathbf{S}}
\newcommand{\bfN}{\mathbf{N}}
\newcommand{\bfL}{\mathbf{L}}
\newcommand{\first}[1]{\textbf{\textcolor{red}{#1}}}
\newcommand{\second}[1]{\textbf{\textcolor{violet}{#1}}}
\newcommand{\third}[1]{\textbf{\textcolor{black}{#1}}}
\icmltitlerunning{Graph Positional Encoding via Random Feature Propagation}
\begin{document}

\twocolumn[
\icmltitle{Graph Positional Encoding via Random Feature Propagation}




\begin{icmlauthorlist}
\icmlauthor{Moshe Eliasof}{bgu}
\icmlauthor{Fabrizio Frasca}{ff}
\icmlauthor{Beatrice Bevilacqua}{bb}
\icmlauthor{Eran Treister}{bgu}
\icmlauthor{Gal Chechik}{biu,comp}
\icmlauthor{Haggai Maron}{comp}

\end{icmlauthorlist}

\icmlaffiliation{bgu}{Department of Computer Science, Ben-Gurion University of the Negev, Israel.}
\icmlaffiliation{biu}{Department of Computer Science, Bar-Ilan University, Israel.}

\icmlaffiliation{bb}{Department of Computer Science, Purdue University, USA.}
\icmlaffiliation{ff}{Department of Computing, Imperial College London, UK.}

\icmlaffiliation{comp}{NVIDIA Research}

\icmlcorrespondingauthor{Moshe Eliasof}{eliasof@post.bgu.ac.il}

\icmlkeywords{Machine Learning, ICML}

\vskip 0.3in
]



\printAffiliationsAndNotice{} 

\begin{abstract}
Two main families of node feature augmentation schemes have been explored for enhancing GNNs: random features and spectral positional encoding. Surprisingly, however, there is still no clear understanding of the relation between these two augmentation schemes. Here we propose a novel family of positional encoding schemes which draws a link between the above two approaches and improves over both. The new approach, named Random Feature Propagation (RFP), is inspired by the power iteration method and its generalizations. It concatenates several intermediate steps of an iterative algorithm for computing the dominant eigenvectors of a propagation matrix, starting from random node features. Notably, these propagation steps are based on graph-dependent propagation operators that can be either predefined or learned. 
We explore the theoretical and empirical benefits of RFP. First, we provide theoretical justifications for using random features, for incorporating early propagation steps, and for using multiple random initializations.  Then, we empirically demonstrate that RFP significantly outperforms both spectral PE and random features in multiple node classification and graph classification benchmarks. 
\end{abstract}

\section{Introduction}
\label{sec:intro}
GNN architectures such as Message-Passing Neural Networks (MPNNs) became very popular for learning with graph-structured data, but they suffer from various limitations. Primarily, they were shown to have limited expressiveness, which hurts their performance in practice \cite{morris2019weisfeiler, xu2019how,morris2021weisfeiler,zhang2021labeling}.
Many approaches have been proposed to alleviate these limitations, including two prominent directions that are based on augmenting the input node features of a given graph, namely, random and spectral features. The first approach builds on 
random node features (RNF) \cite{abboud2020surprising,sato2021random}, and was shown to improve the expressiveness of MPNNs by increasing their capability to disambiguate nodes. Unfortunately, even though RNF theoretically add expressiveness, they do not consistently improve performance on real-world datasets \cite{bevilacqua2021equivariant}. A second approach to alleviate the  expressivity limitation suggests enhancing input node features with spectral PE  often derived from a partial eigendecomposition of the graph-Laplacian matrix \cite{dwivedi2020generalization,kreuzer2021rethinking,wang2022equivariant,lim2022sign,rampasek2022GPS}. These methods have demonstrated an improvement on graph classification benchmarks compared to using only the raw input node features. 

\textbf{Our approach.} This work is based on the observation that iterative matrix eigenproblem solvers provide a natural link between random node features and existing spectral PE methods. Prominent examples of these iterative algorithms include the well-known power method and its generalizations, such as subspace iteration methods \cite{saad2011numerical}.
Specifically, these eigensolvers take random vectors and transform them into eigenvectors by performing a series of propagation and normalization steps. Importantly, when the input matrix represents either the graph adjacency or the Laplacian matrix, the output of these algorithms is the spectral PE mentioned above.

\begin{figure}[t]  \includegraphics[width=0.75\linewidth]{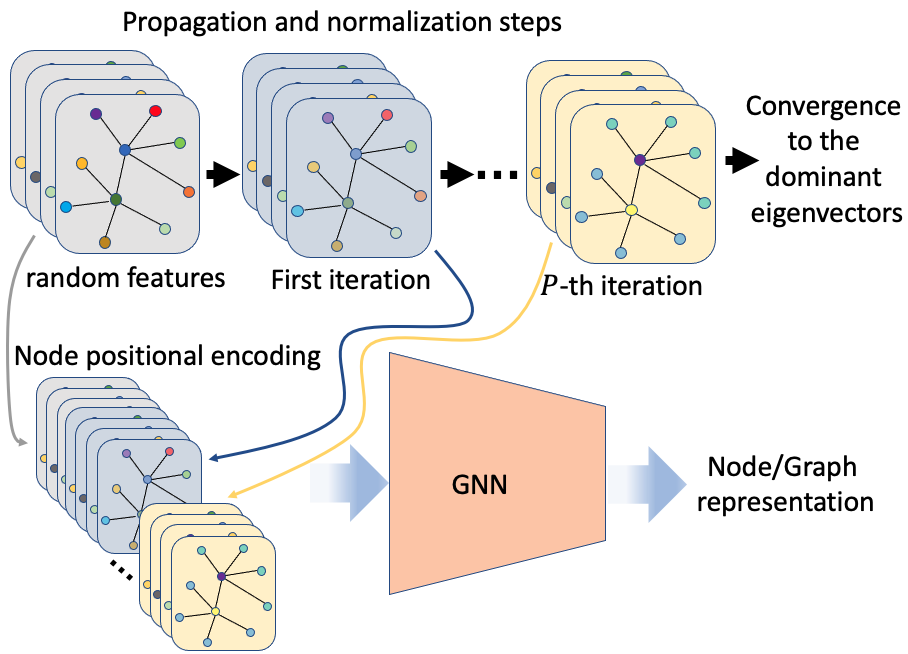}
    \centering
    \caption{An overview of our Random Feature Propagation (RFP). We start from $k$-dimensional random node features and perform a series of propagation and normalization steps according to a predefined or learned propagation operator. We then use the whole trajectory as node PE. As a result of this process, the network has access to features ranging from pure random features to features that approximate the dominant $k$ eigenvectors of the propagation operator. We also suggest methods for learning the propagation operator and for processing multiple trajectories.
    }
    \label{fig:fig1}
\end{figure}
In this paper, we propose a family of PE schemes for graph learning, which we call \textit{Random Feature Propagation} (RFP), designed to improve over RNF and eigendecomposition-based PE schemes while retaining the advantages of both approaches. 
Given a graph with $n$ nodes, an $n\times n$ propagation matrix (e.g., the adjacency matrix),  and a normalization function (e.g., $\ell_2$ normalization or orthonormalization), the RFP PE is defined as a concatenation of multiple successive propagation-and-normalization steps, starting from random node features. We refer to the output of this process as the \emph{RFP trajectory}. As we show in \cref{sec:overview}, this approach can be interpreted as an interpolation between the RNF and the spectral PE strategies outlined above. Therefore, RFP provides GNNs with features ranging from pure RNF to approximations of the dominant eigenvectors of the propagation operator. Figure \ref{fig:fig1} illustrate this concept. We show empirically and theoretically that RFP combines the benefits of both approaches. In particular, we demonstrate that feeding the GNN with the RFP trajectory is more effective than feeding it with RNF and/or the eigenvectors only.

A key component of our framework is the propagation operator. We propose two types of propagation operators: the first kind is a pre-defined operator, such as the adjacency or Laplacian matrices. The second kind of operators are \emph{learnable}, data-dependent, propagation operators that are learned jointly with the GNN in an end-to-end manner according to the downstream task. Learning the operator is performed by incorporating an attention mechanism that outputs scores for pairs of nodes. In contrast to standard propagation operators, the learned one can model additional node interactions, including long-range pairwise dependencies.

We demonstrate that RFP can be used successfully with standard GNNs such as GIN \cite{xu2019how} and GCN \cite{kipf2016semi}. Furthermore, we propose using \emph{multiple} RFP trajectories as PE and processing them with a permutation equivariant architecture called DSS-GNN, which was recently suggested \cite{bevilacqua2021equivariant} for other purposes. The combination of multiple RFP trajectories with DSS-GNN further improves the performance.

We present several theoretical results to support the main choices made in our framework. As a first result, we show that early stages in the RFP trajectory are useful: we prove that MPNNs can use these early-stage features in order to extract structural information, such as the number of triangles in the graph. Our second result provides theoretical support for starting the propagation steps from random features rather than the original node features, as suggested in recent works \cite{rossi2020sign,huang2022from}. In particular, we show that unlike in these previous works, our trajectories converge almost surely to any chosen number of dominant eigenvectors of the propagation operator. As a result, we are able to apply our method to graphs whose input feature matrices have low rank or dimensionality, such as featureless graphs. Third, our framework trivially inherits the strong approximation properties of networks that use random node features as input \cite{abboud2020surprising}.%

Our comprehensive experimental study explores the effects of the propagation and normalization operators, and the number of steps in the RFP trajectory, on both node-level and graph-level tasks. The results indicate that RFP  improves the performance of MPNNs, with or without spectral and random encoding as well as other recent methods.

\textbf{Contributions.} The contributions of this work are as follows: (1) We develop RFP --- a novel PE scheme that combines the benefits of RNF and spectral PE, based on iterative propagation and normalization of random features; (2) We describe a method for learning the propagation operator rather than using a predefined one; 
(3) We present a theoretical analysis of our PE, giving insight into the design choices we have made 
; and (4)  
We evaluate RFP in a series of node and graph classification tasks demonstrating its benefit.

\section{Related Work}

\textbf{Random node features.}
Random initializations are widely used in stochastic approximation algorithms such as matrix trace estimation \cite{hutchinson1989stochastic} and subgraph counting \cite{avron2010counting}. 
In the context of GNNs, random input features were used to improve the expressiveness of MPNNs \cite{dasoulas2019coloring,puny2020global, abboud2020surprising, sato2021random}.
In particular, \citet{puny2020global, abboud2020surprising}, proved a universal approximation theorem (with high probability) for GNNs that use random node features. While theoretically powerful, such a strategy did not yield consistently improved performance on real-world datasets (see \citet{bevilacqua2021equivariant} and Section \ref{sec:experiments}).

\textbf{Spectral positional encoding.}
Embedding nodes using the eigendecomposition of graph operators is a classical technique in data analysis, see e.g.,  \citet{belkin2003laplacian,coifman2006diffusion}.
As these embeddings contain valuable information, recent work has proposed using them as PE to overcome the limitations of MPNNs~\cite{dwivedi2020benchmarkgnns} and as inputs to transformers when applied to graph data~\cite{dwivedi2020generalization}. Popular instances of these methods suggest using the graph Laplacian eigenvectors as initial node features~\citep{dwivedi2020benchmarkgnns,dwivedi2020generalization}, but several improvements have been recently proposed~\citep{kreuzer2021rethinking, wang2022equivariant, lim2022sign,rampasek2022GPS,maskey2022generalized}. These methods use eigenvectors as PE, while we demonstrate that using the trajectory towards computing these eigenvectors, including the initial RNF, yields better performance, and has favorable theoretical properties.

\textbf{Successive application of propagation operators.} The concept of utilizing node features that underwent several applications of graph operators has been proposed and studied in previous papers~\citep{zhou2003learning, wang2020unifying}. 
Differently from our procedure, the output of the propagation steps for these methods is solely determined by the input graph connectivity and partially known node labels, and involves no consideration of input or random node features.
The combination of original input node features and propagation steps has been explored in \citet{klicpera2018predict, wu2019simplifying,rossi2020sign,huang2021combining}. However, these works have different motivations, propose substantially different architectures, and do not make use of random node features. Another recent paper \cite{dwivedi2022graph} suggests using learnable layers in order to enhance positional encoding based on the powers of random walk matrices.

 Closely related to our work is a very recent work by \citet{huang2022from} that adds a matrix inversion-based normalization step after each propagation step, resulting in a propagation-normalization procedure similar to ours. The authors show that when the node feature matrix has full rank, their iterative process converges to the dominant $k$ eigenvectors of the used operator, where $k$ is the number of initial node features, and present an  accuracy improvement over previous approaches. Their approach, however, relies on the \emph{original} raw input node features. In contrast, our approach advocates the use of RNF, different normalization schemes, and several algorithmic enhancements like learning the propagation operator. In Section \ref{sec:theory} we prove why starting from random features is a better choice than the original node features in some cases. In Section \ref{sec:experiments} we show that our approach achieves better results in practice.

\section{Method}
The following section describes the proposed method. We begin with a brief overview of the approach before discussing its main components in more detail.

\textbf{Notation.}\label{sec:notations}
An undirected graph is defined by the tuple $\cal G=({\cal V},{\cal E})$  where $\cal V$ is a set of $n$ vertices and $\cal E$ is a set of $m$ edges. Let us denote by ${\bf f}_i\in\mathbb{R}^c$ the feature vector of the $i$-th node of $\cal G$ with $c$ channels. Also, we denote the adjacency matrix by $\bfA$, the diagonal degree matrix by $\bfD$,
and the corresponding ones with added self-loops by $\tilde \bfA$ and $\tilde \bfD$, respectively. The graph Laplacian (with self-loops considered) is given by $\tilde{\bfL} = \tilde{\bfD} - \tilde{\bfA}$. 
Lastly, we denote the symmetrically normalized adjacency matrix by $\hat \bfA = \tilde{\bfD}^{-\frac{1}{2}} \tilde{\bfA} \tilde{\bfD}^{-\frac{1}{2}}$ and the symmetrically normalized graph Laplacian by $\hat \bfL = \tilde{\bfD}^{-\frac{1}{2}}\tilde{\bfL}\tilde{\bfD}^{-\frac{1}{2}}$.

\subsection{Overview of the framework}
\label{sec:overview}
Our method is illustrated in Figure \ref{fig:fig1}: we begin by sampling multiple vectors from a random distribution. Once the vectors have been generated, they are processed iteratively by alternating propagation and normalization steps. As a final step, we concatenate all the intermediate results of the process above and feed the result into a GNN as PE. The pseudocode of our method can be found in \Cref{app:alg}.

\textbf{Main components.} 
At the core of our method, there are four main components. 
(1) The first component is the type of RNF used by our method. Two distributions are considered: Standard normal and Rademacher distributions\footnote{A Rademacher random variable has a probability of $0.5$ to get the value $1$ or $-1$.}. Other distributions may also be suitable. 

(2) Our second component is the propagation operator, which governs the information flow between nodes. A propagation operator is denoted by a matrix $\bfS \in \mathbb{R}^{n\times n}$ and can be either predefined or learned. 
(3) 
The third component of our method is the normalization function $\bfN:\mathbb{R}^{n\times k}\rightarrow \mathbb{R}^{n\times k}$, which returns a normalized representation of the current node features. The normalization may take the form of a simple $\ell_2$ normalization of the propagated RNF or a more complex orthonormalization process.

(4) The fourth and last component is the architecture used to process the proposed  PE. MPNNs or graph transformers are natural choices. In addition, we propose a method for processing multiple input trajectories using an equivariant DSS-GNN architecture \cite{bevilacqua2021equivariant} which incorporates set symmetries.  

\textbf{Detailed workflow.} In order to compute our PE, we sample the random features $\bfr \in \mathbb{R}^{n \times k}$ from some joint probability distribution $\mathcal{D}$ and perform $P$ of  the following iterations:

\begin{align}
\label{eq:propApplication}
\bfa^{(0)} =& \; \bfr \nonumber  \\
\hat{\bfa}^{(p)} =& \; \bfS \bfa^{(p-1)}  \nonumber \\
{\bfa}^{(p)} =& \begin{cases}\bfN(\hat{\bfa}^{(p)}) , \  \text{if } \text{mod}(p, w) = 0 \\
\hat{\bfa}^{(p)} , \ \text{otherwise}
\end{cases}.
\end{align}
Here, $\bfS$ is the propagation operator. $\bfN$ is the normalization function mentioned above, and $w$ is a positive integer hyper-parameter that controls the frequency of normalization steps.

We define a feature \textit{trajectory} $\bft$ as the concatenation of the initial random features $\bfr$ and the $P$ steps defined in \cref{eq:propApplication}:
\begin{equation}
    \label{eq:trajectory}
    \bft = \bfr \oplus \bfa^{(1)} \oplus \ldots \oplus \bfa^{(P)} \in \mathbb{R}^{n \times k(P+1)},
\end{equation}
where $\oplus$ denotes channel-wise concatenation.
As discussed in detail in the previous work section, a similar propagation and normalization scheme, starting from the original node features, and using a different normalization function, was recently suggested in \citet{huang2022from}.

In order to enrich our PE, it may be helpful to use \emph{several}  trajectory inputs that originate from different random samples $\bfr_i$. In that case, $\bft$ represents the set of these trajectories:
\begin{equation}
    \label{eq:multiTraj}
    \bft=\{ \bft_1, \ldots , \bft_{B} \},
\end{equation}
with $\bft_b = \bfr_b \oplus \bfa^{(1)}_b \oplus \ldots \oplus \bfa^{(P)}_b \in \mathbb{R}^{n \times k(P+1)}$.

We concatenate and embed our PE $\bft$, and input node features $\bff^{\text{in}} \in \mathbb{R}^{n\times c^{\text{in}}}$, using a linear layer $\bfK_{\text{embed}}$ followed by an optional non-linear activation $\sigma$, as follows:
\begin{equation}
    \label{eq:concatEmbedInputs}
    \bff^{(0)} = \sigma(\bfK_{\text{embed}}(\bff^{\text{in}} \oplus \bft)).
\end{equation}
We regard $\bff^{(0)}$ as the initial node embeddings obtained by our method.
We then utilize several GNN layers to further process those initial node embeddings. 

\subsection{Propagation operators}
\label{sec:operators}
The propagation operator $\bfS$ in \eqref{eq:propApplication} determines the flow of information between nodes in a graph and is represented as a matrix $\bfS \in \mathbb{R}^{n\times n}$. The choice of this operator directly controls the resulting PE.  
We consider two types of propagation operators: (1) \emph{Predefined} propagation operators based on the input graph connectivity, such as the adjacency matrix or graph Laplacian; and (2) \emph{Learned} propagation operators that are generated based on the input graph using a parametric model. We now discuss these two types of operators.

\textbf{Predefined propagation operators.}
We consider two popular predefined operators: the symmetrically normalized adjacency matrix with added self-loop $\hat{\bfA}$ 
(which is widely used in GNNs to propagate features, see e.g.\ \citet{kipf2016semi,wu2019simplifying,chen20simple}) and the analogous symmetrically normalized Laplacian matrix $\hat{\bfL}$. 
When used with RFP, we will see that these operators give access to the eigenvectors of the graph operator. Other predefined propagation operators can also be considered.

\begin{figure}[t]
    \includegraphics[width=\columnwidth]{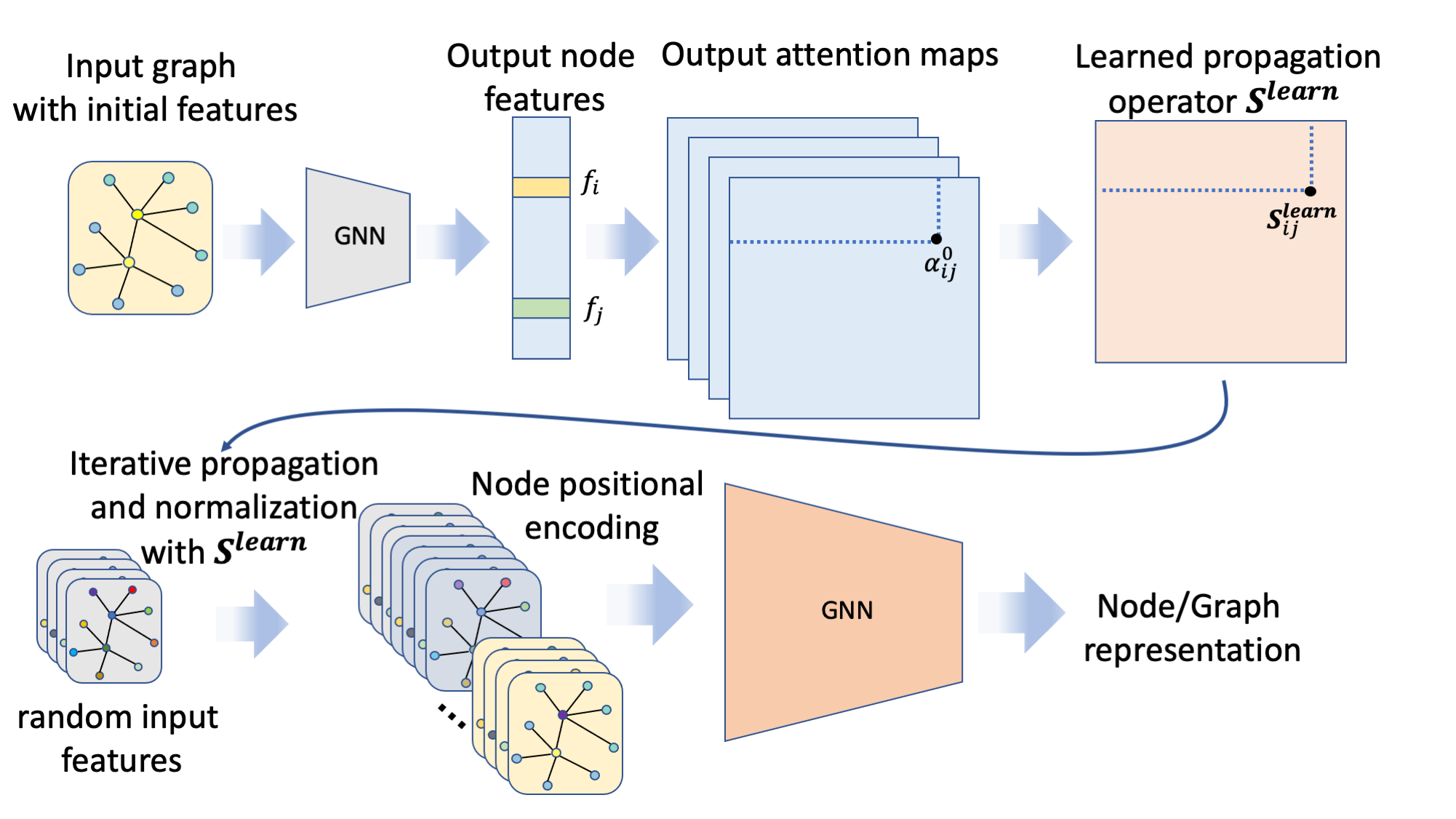}
    \centering
    \caption{Workflow when using a learnable propagation operator. To obtain the propagation operator, we start from the original node features, process them with a GNN and apply an attention mechanism. We then use the resulting operator $\bfS^{\text{learn}}$ to generate our PE by applying it to random features.}
    \label{fig:learn_op}
\end{figure}

\textbf{Learnable propagation operator.} 

We additionally propose to learn the propagation operator. The advantages of using learnable operators are twofold. First, propagation by fixed operators may not produce suitable features for the learning task at hand, so it may be beneficial to tune the propagation operator to the particular learning task. Second, our learnable operators can capture pairwise node interactions that are difficult to model using our previously proposed local operators. To learn the operator, we utilize several GNN layers applied to the original node features, followed by a multi-head self-attention mechanism~\cite{vaswani2017attention} which computes a score for each pair of nodes in the graph (see \cref{fig:learn_op}).
While the predefined operators that we (and previous works)  considered are local, the learnable operator, as we formulate it, is capable of capturing long-distance structural similarities. The reason for this is that the GNN layers before the attention mechanism encode the local neighborhood (both connectivity and node features) around each node, allowing the attention to measure the degree of similarity between those neighborhoods. Due to this, our learnable operator can easily connect two nodes that are very far apart in the graph but have similar local structures. 
Details about the learning procedure are in \Cref{app:learnableOperator}.

\subsection{Normalization function}
\label{sec:normalizationFunction}
A second key component of our RFP framework, is the normalization function. From a practical standpoint, a normalization step helps to control the magnitude of the features, which may grow exponentially with the number of iterations, depending on the propagation operator $\bfS$. Furthermore, as we discuss below, the choice of the normalization function has a significant effect on the feature trajectory $\bft$ and its properties. 
In this paper, we consider two normalization functions: 
a simple channel-wise  $\ell_2$ normalization, and a more sophisticated orthonormalization  described below.

\textbf{$\ell_2$ normalization.}
The first normalization we consider is a simple channel-wise   $\ell_2$ feature normalization, defined as:
\begin{equation}
\label{eq:l2NormNormalization}
    \bfN^{\ell_2}(\hat{\bfa}^{(p)}) = \frac{\hat{\bfa}^{(p)}}{\| \hat{\bfa}^{(p)} \|_2}.
\end{equation}
This normalization function is applied \emph{independently} to each channel of the propagated node features, as described in \cref{eq:propApplication}. Note that using this kind of normalization in tandem with the propagation steps, mimics the Power method iteration \cite{saad2011numerical}, a simple and popular algorithm for computing the dominant eigenvalue and eigenvector of a given diagonalizable matrix $\bfS \in \mathbb{R}^{n\times n}$.
The connection between our propagation and normalization schemes to the power iteration method sheds light on the feature trajectory we use as PE:  given a diagonalizable propagation operator $\bfS$ with some random node features $\bfr \in \mathbb{R}^{n \times k}$, when $w=1$, the $p$-th iteration described in \cref{eq:propApplication} is exactly the $p$-th power method iteration.
This implies that in the limit, i.e., $P \rightarrow \infty$,  under mild assumptions\footnote{e.g.\ if $\bfS$ has a simple spectrum and the initialization has a non-zero projection on the dominant eigenvector.}, all the channels, i.e., columns of $\bfa^{(P)} \in \mathbb{R}^{n \times k}$, will converge (up to sign) to the leading eigenvector of $\bfS$.

\textbf{Orthonormalization.}
The second normalization method we consider is the joint, channel-wise orthonormalization of the node features. Given node features $\hat{\bfa}^{(p)} \in \mathbb{R}^{n \times k}$ at some trajectory iteration $p$, the proposed normalizing function reads:
\begin{equation}
    \label{eq:orthonomalization}
    \bfN^{\text{QR}}(\hat{\bfa}^{(p)}) =  {\bfa}^{(p)} \quad,
\end{equation}
where ${\bfa}^{(p)} \in  \mathbb{R}^{n \times k}$ is such that ${{\bfa}^{(p)}}^{\top} {\bfa}^{(p)} = \bfI_k$  and  the columns of ${\bfa}^{(p)}$  span the column space of $\hat{\bfa}^{(p)}$. 
In practice, several algorithms can realize \cref{eq:orthonomalization}. We use PyTorch's built-in QR decomposition that implements the Householder orthogonalization scheme.
Normalization according to  \cref{eq:orthonomalization} offers a key difference compared to the $\ell_2$ normalization from \cref{eq:l2NormNormalization}:  in the former, the iterations described in \cref{eq:propApplication} take the name of Subspace Iteration Method \cite{saad2011numerical}, a generalization of the power iteration method for computing \emph{several} dominant eigenvectors simultaneously. In Section \ref{sec:theory}, we show that starting with RNF drawn from a continuous distribution, this process almost surely converges to the dominant $k$ eigenvectors of the propagation operator $\bfS$.

\subsection{Architectures}
\label{sec:archs}
We propose to couple our RFP PE with two kinds of architectures. The first is a standard MPNN or transformer that consumes the initial embedding features $\bff^{(0)}$ from \Cref{eq:concatEmbedInputs}, and then performs several message-passing/attention layers based on some backbone layer such as GCN \cite{kipf2016semi}, GraphConv \cite{morris2019weisfeiler}, GIN \cite{xu2019how} or GraphGPS \cite{rampasek2022GPS}.

As we show in \cref{sec:theory}, there are benefits to processing several trajectories. 
 For example, one advantage of using multiple trajectories is that they allow GNNs to easily implement a substructure counting algorithm. However, processing several  trajectories with the GNNs described above is not trivial. One way would be to concatenate them along the feature dimension and apply a standard MPNN as mentioned above. Unfortunately, it is unclear in what order these trajectories should be concatenated: they represent propagations originating from independently sampled random vectors with no canonical order. As a more effective approach to the problem, we suggest incorporating set symmetries by utilizing the DSS-GNN architecture \cite{maron2020learning, bevilacqua2021equivariant}, an architecture that was designed to process unordered sets of graphs.
 
Given multiple trajectories $\{ \bft_1, \ldots, \bft_{B} \}$, $\bft_b \in \mathbb{R}^{n \times k(P+1)}$, we first apply an embedding function $\bfK_{\text{embed}}:\mathbb{R}^{k(P+1)}\rightarrow \mathbb{R}^{d}$ to each trajectory $\bft_b$ independently, and then reshape these embeddings into a $B\times n \times d$ tensor $\textbf{F}$. These tensors can be viewed as a collection of $B$ graphs containing node features based on the above $B$ trajectories and with the same connectivity as the original graph.  Thus, we can process them using the DSS-GNN architecture, which updates the representations of all the nodes for all $B$ graphs simultaneously by using layers of the following form  
\begin{figure}[t]
\includegraphics[width=0.75\columnwidth]{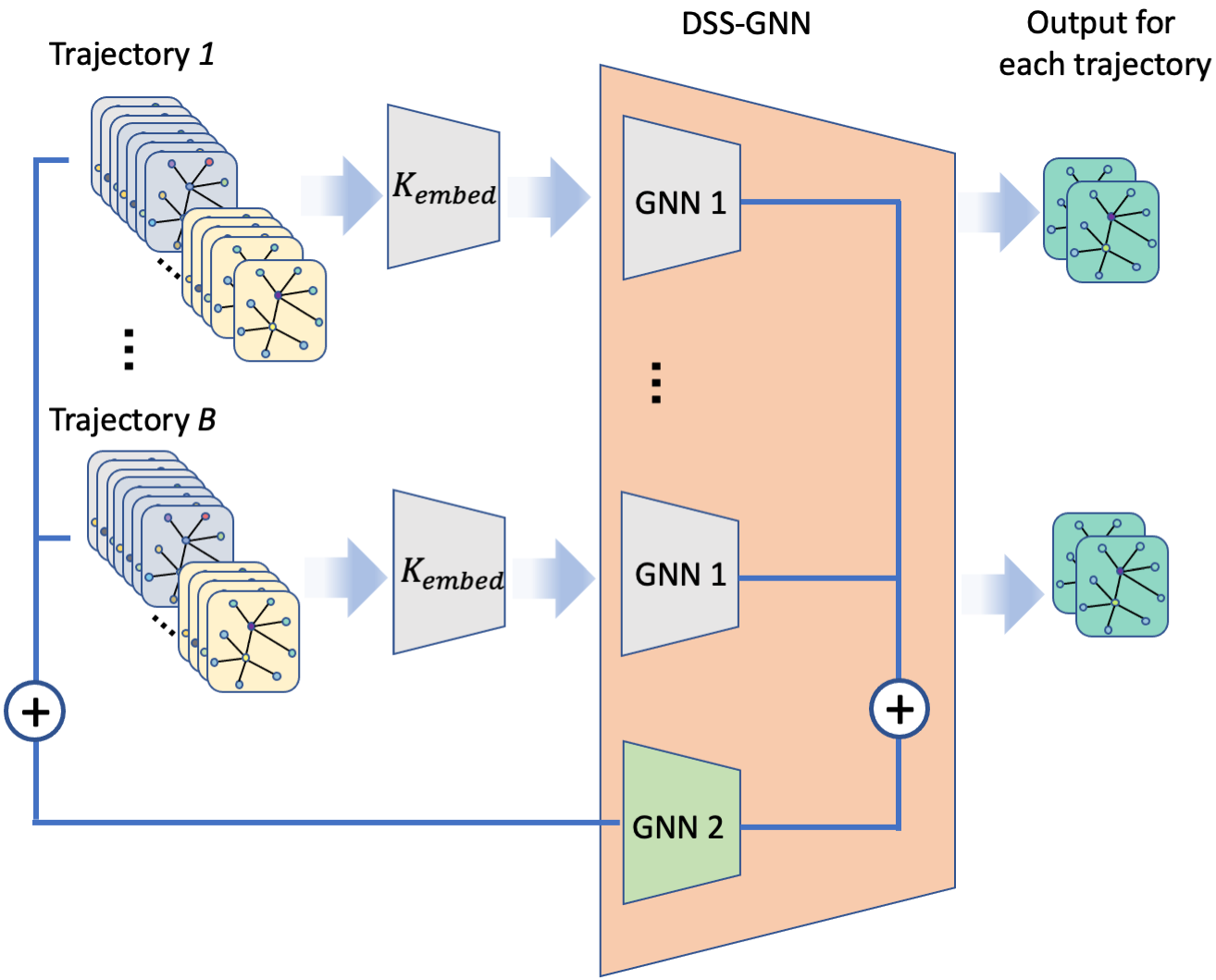}
\centering
\caption{A single layer of our proposed DSS-GNN architecture for processing multiple trajectories in parallel. $\bfK_{\text{embed}}$ and $\text{GNN} \, 1$ are shared across all B trajectories. }
\label{fig:fig2}
\end{figure}

 $$   L(\textbf{F} )_b = \text{GNN}_1(\bff_b) + \text{GNN}_2\left(\sum_{b=1}^{B}\bff_b\right).$$
 $\text{GNN}_1, \text{GNN}_2$ are any GNNs, and $\bff_b$ and $L( \textbf{F} )_b$ refer respectively to the given  and the obtained features of the $b$-th graph in the set. Figure \ref{fig:fig2} illustrates the DSS-GNN layer. In graph-level tasks, after several such layers, a readout function is applied to each graph independently, followed by a Deepsets architecture \cite{zaheer2017deep} applied to the outputs. In node-level tasks, the $B$ dimension in the output tensor is aggregated in a permutation-invariant manner.

\section{Theoretical Analysis}\label{sec:theory}
In this section, we explain the main design choices taken in the development of our RFP framework. Specifically, we elaborate on the following points:
(1) The importance of using multiple random initializations; 
(2) The significance of incorporating early propagation iterations in our RFP PE; and (3) The benefits of using random node features as a starting point for the trajectories over using the original node features. 

All results are proved in \Cref{app:theory}, which also includes an analysis of the time complexity of our approach.

\subsection{Expressiveness} \label{subsec:triangles}

\textbf{Triangle counting.} The ability to count triangles is a fundamental skill of interest for a graph learning model. On the theoretical side, triangles are the simplest substructures standard MPNNs provably cannot count~\citep{chen2020can}; more practically, interesting triangular structures emerge in chemistry, e.g.\ cyclopropane and its derivatives, and social networks, e.g.\ locally clustered communities.
Interestingly, and differently than standard GNNs, our approach is naturally suited for this kind of tasks: it can, in fact, implement the $\textsc{TraceTriangle}_R$ algorithm~\citep{avron2010counting}, a randomized algorithm for the estimation of triangle counts:
\begin{proposition}\label{prop:implementing-trace-triangle}
    Let $\mathcal{G}_n$ be the class of all finite, simple, undirected graphs with $n$ vertices. There exists a choice of hyperparameters $\mathcal{H}$, and weights $\mathcal{W}$, such that our algorithm (ref. \Cref{alg:rfp}), equipped with either a DSS-GNN~\citep{bevilacqua2021equivariant} or GraphConv~\citep{morris2019weisfeiler} downstream architecture, exactly implements the $\textsc{TraceTriangle}_R$ algorithm by \citet{avron2010counting} over $\mathcal{G}_n$.
\end{proposition}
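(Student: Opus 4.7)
The plan is to explicitly construct RFP hyperparameters and downstream weights so that the end-to-end pipeline outputs the TraceTriangle$_R$ estimator,
\[
\widehat T_R \;=\; \frac{1}{6R}\sum_{b=1}^R \bfr_b^\top \bfA^3 \bfr_b,
\]
a randomized estimator of $\#\text{triangles}(\mathcal G)$ that combines the identity $\#\text{triangles}(\mathcal G)=\tfrac16\mathrm{tr}(\bfA^3)$ (for undirected simple graphs) with Hutchinson's identity $\mathrm{tr}(\bfM)=\mathbb{E}[\bfr^\top \bfM \bfr]$ for Rademacher $\bfr$. The quantity that must be materialized at each node is therefore $\bfr_b(v)\cdot(\bfA^3\bfr_b)(v)$, subsequently summed over $v$ and averaged over $b$.

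First, I would fix the RFP side so that every trajectory literally carries $\bfr_b$ and $\bfA^3\bfr_b$ as its endpoint channels. Concretely: set $\mathcal D$ to the Rademacher distribution, per-trajectory width $k=1$, number of trajectories $B=R$, propagation operator $\bfS=\bfA$, number of propagation steps $P=3$, and normalization window $w\geq 4$ so that the $\mathrm{mod}(p,w)=0$ branch in \cref{eq:propApplication} is never triggered for $p\in\{1,2,3\}$. Each trajectory then equals $\bft_b = \bfr_b \oplus \bfA\bfr_b \oplus \bfA^2\bfr_b \oplus \bfA^3\bfr_b$, exposing the two channels of interest to the downstream network through $\bfK_{\text{embed}}$.

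Second, I would design the downstream network to compute $\bfr_b^\top \bfA^3\bfr_b = \sum_v \bfr_b(v)(\bfA^3\bfr_b)(v)$ per trajectory. The only non-trivial piece is the per-node product of the two channels, which I would realize via polarization, $xy=\tfrac14\bigl[(x+y)^2-(x-y)^2\bigr]$: use $\bfK_{\text{embed}}$ to linearly form the sum and difference channels $\bfr_b\pm\bfA^3\bfr_b$, apply a squaring activation $\sigma(x)=x^2$, and recombine linearly in a GraphConv layer with $W_2=0$. A sum-pooling readout then yields $\bfr_b^\top\bfA^3\bfr_b$. For the cross-trajectory average, the DSS-GNN instantiation delivers $\sum_b$ through the $\mathrm{GNN}_2\bigl(\sum_b \bff_b\bigr)$ branch followed by the Deepsets output head, whereas the GraphConv instantiation can equivalently pack the $R$ independent estimators into channels ($B=1$, $k=R$) and average them in the final linear head, with the $1/6$ scaling absorbed into its last weight.

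The main obstacle is precisely this per-node multiplication: vanilla message-passing layers are affine in the channel dimension, so an exact product of two channels is unavailable without a specifically chosen non-linearity. The polarization-with-squaring trick sidesteps the obstacle, and the choice $\sigma(x)=x^2$ is absorbed into the hyperparameter set $\mathcal H$ allowed by the statement. Everything else---producing $\bfA^3\bfr_b$ exactly through three unnormalized RFP steps, summation over nodes, and averaging across the $R$ independent Rademacher samples---is linear bookkeeping readily handled by the pipeline, so the resulting scalar coincides with $\widehat T_R$ as defined in \citet{avron2010counting}.
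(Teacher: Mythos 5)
Your overall plan coincides with the paper's: Rademacher initialization, $\bfS=\bfA$, $k=1$, one trajectory per Hutchinson sample, no normalization during the relevant steps, a per-node product of two trajectory channels, sum-readout, and a final linear averaging with the $\nicefrac{1}{6}$ factor folded in. (The paper pairs the channels as $(\bfA\bfr_b)\odot(\bfA^2\bfr_b)$ rather than $\bfr_b\odot(\bfA^3\bfr_b)$ --- the same scalar by symmetry of $\bfA$ --- which lets it get away with $P\geq 2$ and supports its ``early steps are already informative'' narrative, but that difference is cosmetic.)

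The genuine gap is in how you realize the per-node multiplication. You correctly identify it as the only non-trivial step, but your resolution --- polarization $xy=\tfrac14[(x+y)^2-(x-y)^2]$ with a squaring activation $\sigma(x)=x^2$ --- changes the downstream architecture rather than working within it. The proposition asserts exact implementation by a GraphConv or DSS-GNN network, and in the paper these are explicitly ReLU (or ReLU-like) message-passing layers; the activation is part of the architecture, not of the hyperparameter set $\mathcal{H}$ (which in the paper's proof consists of $\bfS, P, w, B, k$ and the sampling distribution). A finite ReLU network cannot compute $xy$ exactly on a continuum, so with the architecture as stated your construction does not go through. The paper's key observation is that exactness is still achievable because the inputs to the product are drawn from a \emph{finite} set: $\bfr\in\{-1,+1\}^n$ and $\bfA\in\{0,1\}^{n\times n}$ imply that the pairs $\big((\bfA\bfr)_v,(\bfA^2\bfr)_v\big)$ take finitely many values, so a $3$-layer ReLU-like MLP can exactly \emph{memorize} the multiplication map on that set (Theorem 3.1 of \citet{yun2019small}, extended in \citet{frasca2022understanding}), and this MLP embeds into GraphConv layers by zeroing the neighbor-aggregation weights. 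This finiteness-plus-memorization argument is the missing idea in your proposal; without it (or an explicit relaxation of the claim to architectures with quadratic activations), the ``exactly implements'' part of the statement is not established.
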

These results may not appear surprising, in light of our RFP framework utilizing random node features, which are known to increase the expressiveness of MPNNs. However, the importance of \Cref{prop:implementing-trace-triangle} lies in showing how the most essential component of our approach, namely the \emph{propagation} of random node features, unlocks specific computational abilities: as we detail in \Cref{cor:approximating-triangle-counts} (\Cref{app:triangles}), our method can compute an approximation of triangle counting without resorting to expensive matrix multiplications. 

\Cref{prop:implementing-trace-triangle} and its proof shed light on two other important aspects of our approach. The first one relates to the information enclosed in early propagation steps: $\textsc{TraceTriangle}_R$ can be implemented by considering only $\bfa^{(1)}, \bfa^{(2)}$, i.e., the features generated by the first two applications of the propagation operator. This suggests that meaningful patterns can already be extracted from initial propagation steps, before reaching eigenvector convergence. These early features, however, are not directly accessible by methods that employ, e.g.\ eigenvector-based PEs, as in \citet{dwivedi2020benchmarkgnns}. The second aspect is related to the fact that our method utilizes multiple trajectories originating from different random starting positions. While this characteristic is essential to implement $\textsc{TraceTriangle}_R$, more generally, it induces a natural alignment with randomized Monte Carlo algorithms and it may also contribute to enhanced performance in certain settings (see \Cref{sec:experiments}).

\textbf{Other substructure features.} \Cref{prop:implementing-trace-triangle} and \Cref{cor:approximating-triangle-counts} can easily be extended so that RFP can approximate the computation of any $\text{trace} (\bfS^P)$, with $\bfS$ a generic (symmetric) operator and $P \in \mathbb{N}$ any integer power, opening up the possibility to extract more sophisticated structural features. For example, it is known that $\text{trace}(\bfA^P)$ counts the number of closed walks of length $P$ and is related to the number of $P$-cycles $c_P$~\citep{perepechko2021the}. A notable case corresponds to that of quadrangles, for which the closed form equation $c_4 = \nicefrac{1}{8} \big ( \text{trace}(\bfA^4) + \text{trace}(\bfA^2) - 2 \sum_{v} \text{deg}^2(v) \big )$ is due to~\citet{harary1971on}.

\textbf{Universal approximation.} \label{subsec:universality} In addition to the results above, it is worth noting that our RFP framework can easily default to the RNI method introduced by~\citet{sato2021random} and subsequently analyzed in its expressiveness by \citet{abboud2020surprising} and \citet{puny2020global}. The latter work derives $(\epsilon, \delta)$ universal approximation properties, which can be shown to be inherited by our RFP algorithm under the same assumptions. We formalize this result in \Cref{app:universal_approximation}.

\subsection{Comparison of random and initial input features} 
\label{subsec:convergence}
While our RFP focuses on the propagation and normalization steps of random node features, the same procedure can instead be applied to the initial, input node features, whenever those are available~\citep{rossi2020sign,huang2022from}. The convergence of the propagation-normalization iterations will then depend on the initial node feature matrix. In particular, in \citet{huang2022from}, given initial node features $X \in \mathbb{R}^{n \times k}$, $X$ must be of full rank in order to guarantee convergence to the $k$ dominant eigenvectors of the propagation operator. 
This highlights two limitations caused by choosing the initial features as the starting point for the propagation-normalization iterations:
(1) Convergence to the dominant eigenvectors is only guaranteed if the matrix of initial node features is of full rank, an assumption that might not always be met; (2) If convergence is reached, the number of dominant eigenvectors is exactly the number of initial input features, and thus cannot be customized to the task at hand. These limitations hinder the use of their method when learning graphs with no features or with low dimensional features, which often appear in graph classification tasks.
On the contrary, the usage of random node features allows decoupling the number of eigenvectors to be obtained at convergence from the number of given initial features. We formalize that property in the following proposition, which we prove in \Cref{app:convergence}.
\begin{proposition}\label{theo:convergence}
For any $k \leq n$, let $X = [X_1, \dots, X_k] \in \mathbb{R}^{n \times k}$ be a concatenation of continuous i.i.d random variables, $X_i \in \mathbb{R}^n$ with probability density function $f$. Let $\mathbf{S} \in \mathbb{R}^{n \times n}$ be a graph operator, and denote by $\lambda_1, \dots, \lambda_k$ the $k$ dominant eigenvalues of $\mathbf{S}$ in decreasing order of magnitude. Assume $\mathbf{S}$ to be symmetric and that  $\lvert \lambda_i \rvert > \lvert \lambda_{i+1} \rvert$, $1 \le i \le k$.
Then, the Subspace Iteration algorithm, and hence the iteration of our RFP PE scheme, converges (up to sign) to the $k$-dominant eigenvectors of $\mathbf{S}$.
\end{proposition}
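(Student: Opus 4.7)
The plan is to reduce the almost-sure convergence claim to the classical deterministic convergence theorem for the Subspace Iteration Method (see, e.g., Theorem 5.2 in Saad's \emph{Numerical Methods for Large Eigenvalue Problems}), and then show that the only hypothesis of that theorem that is not already given---a genericity condition on the starting matrix $X$---holds with probability one under the assumed continuous distribution on $X$.

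First, I would diagonalize: since $\bfS$ is symmetric, write $\bfS = V \Lambda V^\top$ with $V = [v_1,\dots,v_n]$ orthogonal and $\Lambda = \mathrm{diag}(\lambda_1,\dots,\lambda_n)$, the eigenvalues ordered so that $|\lambda_1| > \cdots > |\lambda_k| > |\lambda_{k+1}| \ge \cdots \ge |\lambda_n|$. Change basis: let $C = V^\top X \in \mathbb{R}^{n \times k}$ and split $C = \begin{pmatrix} C_1 \\ C_2 \end{pmatrix}$ with $C_1 \in \mathbb{R}^{k \times k}$. The classical convergence theorem states that if $C_1$ is invertible, then the subspace spanned by the iterates $\bfS^p X$ (equivalently, after orthonormalization at every step as in our $\bfN^{\mathrm{QR}}$ scheme) converges to $\mathrm{span}(v_1,\dots,v_k)$ at a geometric rate controlled by $|\lambda_{k+1}/\lambda_k|$. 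Combined with the strict magnitude gaps assumed between $\lambda_1,\dots,\lambda_k$, a standard deflation argument (applied iteratively to the leading columns) upgrades this to column-wise convergence of the orthonormalized iterates to $v_1,\dots,v_k$ individually, up to sign.

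The remaining task, and the only probabilistic input to the argument, is to show that $C_1$ is almost surely invertible. This is where the continuous-distribution assumption on $X$ enters. Since the columns of $X$ are i.i.d.\ with density $f$, the joint distribution of $X$ on $\mathbb{R}^{n \times k}$ is absolutely continuous with respect to Lebesgue measure. The map $X \mapsto C = V^\top X$ is a linear bijection (orthogonal on each column), so $C$ also has an absolutely continuous distribution on $\mathbb{R}^{n \times k}$, and therefore its $k \times k$ sub-block $C_1$ has an absolutely continuous distribution on $\mathbb{R}^{k \times k}$. The set $\{M \in \mathbb{R}^{k \times k} : \det M = 0\}$ is a proper algebraic variety and hence has Lebesgue measure zero. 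Consequently,
\begin{equation*}
    \Pr(C_1 \text{ is singular}) \;=\; 0.
\end{equation*}

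Putting these pieces together: with probability one the starting matrix $X$ satisfies the genericity hypothesis of the deterministic Subspace Iteration convergence theorem, and on that full-measure event the iterates converge (up to sign) to $v_1,\dots,v_k$. Since our RFP scheme with $\bfN^{\mathrm{QR}}$ normalization is exactly the Subspace Iteration Method (as noted in Section~\ref{sec:normalizationFunction}), this yields the desired conclusion. The only subtle point---the main ``obstacle,'' though a mild one---is being careful in the deflation step used to pass from subspace convergence to individual eigenvector convergence; this requires the strict inequalities $|\lambda_i| > |\lambda_{i+1}|$ for all $1 \le i \le k$, which are assumed, and an induction on $i$ observing that once the span of the first $i$ iterate columns has converged, the orthonormalization isolates the $(i{+}1)$-st column as a standard power iteration in the orthogonal complement.
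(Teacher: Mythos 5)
Your overall strategy coincides with the paper's: both reduce the statement to a deterministic convergence theorem for the Subspace Iteration Method in Saad's book and then show that the required genericity of the random starting matrix holds almost surely, using the fact that the degenerate starts form the zero set of a non-trivial polynomial and hence a Lebesgue-null set (the paper isolates exactly this as \Cref{lem:rank}).

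There is, however, one imprecise step. The genericity condition you verify --- invertibility of the full $k \times k$ block $C_1$ of $V^\top X$ --- guarantees only convergence of the \emph{subspace} spanned by $\bfS^p X$ to $\mathrm{span}(v_1,\dots,v_k)$. It does not by itself yield the column-wise convergence to the individual eigenvectors that the proposition asserts: for example, if $\langle X_1, v_1\rangle = 0$ while $C_1$ is nonetheless invertible, the first orthonormalized column does not converge to $\pm v_1$, and your deflation induction fails already at its base case. What the deflation actually requires is the nested family of conditions that, for every $1 \le i \le k$, the leading $i \times i$ sub-block of $C_1$ is invertible --- equivalently, $\mathrm{rank}(P_i[X_1,\dots,X_i]) = i$ with $P_i$ the spectral projector onto the top $i$ eigenvectors, which is precisely the hypothesis the paper checks when invoking Saad's theorem on convergence to the Schur vectors. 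The repair is immediate: your measure-zero argument applies verbatim to each leading minor, and a finite intersection of almost-sure events is almost sure. With that adjustment your proof is correct and essentially the same as the paper's.
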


\section{Experiments}
\label{sec:experiments}
We conduct a comprehensive set of experiments to answer four main questions: (1) Does RFP outperform its natural baselines, namely RNF and spectral PE? (2) Is there a benefit to using one of our proposed architectures over the other? (3) How do different operators compare? Specifically, does the learnable operator improve performance? (4) How do different normalization schemes compare? Full details on experiments as well as additional results are in \cref{app:exp}.

\textbf{Baselines.} RFP is compared with a number of popular and recent methods. Specifically, we focus on the comparison of RNF  \cite{abboud2020surprising}, and Laplacian PE  \cite{dwivedi2020benchmarkgnns}.  When using Laplacian eigenvectors only as a baseline (denoted by $\hat{\bfL}$ EIGVECS$^\dagger$) we use the eigenvectors corresponding to the smallest eigenvalues, as it is standard in the literature \cite{dwivedi2020benchmarkgnns}. For other baselines that involve PE based on other operators, we use the eigenvectors corresponding to the largest eigenvalues, to be consistent with the power method and RFP and allow easy comparison.

\subsection{Graph-level tasks}
\label{sec:graphLevelTasks}

\textbf{Real World Data.}
We consider both the OGBG-MOLHIV \citep{Hu2020Strategies} and the ZINC-12k (500k budget) \citep{ZINCdataset,gomez2018auto,dwivedi2020benchmarkgnns} molecular datasets, where the tasks consist of predicting whether an input molecule inhibits HIV replication in the former, and the value of the constrained solubility in the latter. As a natural baseline, here, we also consider RWPE \cite{dwivedi2022graph} that utilizes powers of the random walk operator as positional encoding. In all baselines and RFP variants, we use GINE \cite{hu2020ogb} as a backbone GNN, that extends GIN \cite{xu2019how} by considering both node and edge features. Results are reported in \Cref{table:graphTasks_realData}. We observe that: (1)
 RFP significantly outperforms the natural baselines, such as Laplacian ($\hat{\bfL}$) and adjacency matrix ($\hat{\bfA}$) based PE, as well as RNF and their combination; (2) The RFP - DSS variant is consistently proven to be beneficial with respect to the GNN version; (3) Learning the propagation operator $\bfS^{\text{learn}}$ (which we couple with a DSS architecture) further yields better results.
    Overall,  our RFP method can outperform a variety of methods, including provably expressive GNNs such as \citet{corso2020principal,beaini2021directional}, as well as HIMP \cite{fey2020heirarchical} and GSN \citep{bouritsas2022improving}, that explicitly use domain-specific knowledge (and being outperformed only by CIN \citep{bodnar2021CW}). In \cref{app:ablations} we show that further improvements can be obtained by combining our RFP with domain-aware structural encoding methods.
  \begin{table}[t]
    \caption{ZINC-12k and OGBG-MOLHIV datasets. Backbone is GINE for all baselines and RFP variants.
     $\dagger$ denotes eigenvectors corresponding to eigenvalues in ascending order of magnitude. The top three models are highlighted by \first{First}, \second{Second}, \third{Third}.}\label{table:graphTasks_realData} 
 \footnotesize
  \center{
 \resizebox{1.0\linewidth}{!}{
  \begin{tabular}{lcc}
    \toprule
    \multirow{2}{*}{Method} & ZINC-12K & OGBG-MOLHIV \\
     & MAE $\downarrow$ & ROC-AUC(\%) $\uparrow$\\
        \midrule
    \textbf{Standard GNNs} & & \\
    $\,$ GCN  & 0.321 $\pm$ 0.009 & 76.06 $\pm$ 0.97  \\
    $\,$ GIN &  0.163 $\pm$ 0.004  & 75.58 $\pm$ 1.40 \\
    $\,$ PNA &  0.133 $\pm$ 0.011 & 79.05 $\pm$ 1.32   \\
    $\,$ DGN & -- & 79.70 $\pm$ 0.97 \\
            \midrule
    \textbf{Domain-aware} & & \\
    $\,$ HIMP & -- &  78.80 $\pm$ 0.82 \\
    $\,$ GSN & \second{0.101 $\pm$ 0.010} &  \third{80.39 $\pm$ 0.90}  \\
    $\,$ CIN & \first{0.079 $\pm$ 0.006} &  \first{80.94 $\pm$ 0.57}  \\
    \midrule
    \textbf{Natural baselines} & & \\
    $\,$ $\hat{\bfL}$ EIGVECS $^\dagger$ & 0.1557 $\pm$ 0.012  &  77.88 $\pm$ 1.82 \\
    $\,$ $\hat{\bfA}$ EIGVECS & 0.1501 $\pm$ 0.018  & 78.13 $\pm$ 1.61\\
    $\,$ $\hat{\bfL}, \hat{\bfA}$ EIGVECS & 0.1434 $\pm$ 0.015  & 78.30 $\pm$ 1.47 \\
    $\,$ RNF & 0.1621 $\pm$ 0.014 & 75.98 $\pm$ 1.63 \\
    $\,$ RNF \& $\hat{\bfL}, \hat{\bfA}$ EIGVECS &  0.1408 $\pm$ 0.020 &  78.96 $\pm$ 1.33 \\ 
    $\,$ RWPE &  0.1279 $\pm$ 0.005 & 	78.62 $\pm$ 1.13\\
    \midrule
    \textbf{RFP (ours)}& & \\
    $\,$ RFP - ${\ell_2}$ - ${\hat{\bfL}}, {\hat{\bfA}}$ & 0.1368 $\pm$ 0.010  & 77.91 $\pm$ 1.43 \\ 
    $\,$ RFP - QR - ${\hat{\bfL}}, {\hat{\bfA}}$ & 0.1152 $\pm$ 0.006  & 79.83 $\pm$ 1.16 \\
     $\,$ RFP - QR - ${\hat{\bfL}}, {\hat{\bfA}}, \bfS^{\text{learn}}$  & 0.1143 $\pm$ 0.008 & 79.94 $\pm$ 1.51 \\
     $\,$ RFP - QR - ${\hat{\bfL}}, {\hat{\bfA}}$ -DSS & 0.1117 $\pm$ 0.009  & 80.53 $\pm$ 1.04\\
     $\,$ RFP - QR - ${\hat{\bfL}}, {\hat{\bfA}, \bfS^{\text{learn}}}$ -DSS  & \third{0.1106 $\pm$ 0.012} & \second{80.58 $\pm$ 1.21}\\
    \bottomrule
  \end{tabular}}}
  \end{table}

\begin{table*}[t]
    \caption{Node classification accuracy ($ \%$) $\uparrow$. \textsuperscript{*} denotes the best result out of several variants.  An MLP backbone is used for Cornell, Texas, and Wisconsin, and a GCN backbone for the rest of the datasets. The top three models are highlighted by \first{First}, \second{Second}, \third{Third}.}
\label{table:nodeClassification} %
  \center{
  \footnotesize
  \resizebox{\linewidth}{!}{
  \begin{tabular}{lcccccc|ccc}
    \toprule
    Method & Squirrel & Film &  Cham. & Corn. & Texas & Wisc. & Citeseer & Pubmed & Cora \\
    Homophily & 0.22 & 0.22 & 0.23 & 0.30  & 0.11 & 0.21 & 0.74 & 0.80 & 0.81 \\
        \midrule
    \textbf{Standard baselines} & & \\
    $\,$ MLP & 28.77  & 36.41 & 46.24 & 80.89 & 80.81 & 83.88 &  74.16 & 87.05 & 75.59  \\
    $\,$ GCN & 23.96 & 26.86 & 28.18 & 52.70 & 52.16 & 48.92 & 73.68 & 88.13 & 85.77  \\
      \midrule
    \textbf{Node classification designated architectures} & & \\
    $\,$ WRGAT & 48.85 & 36.53 & 65.24 &  81.62 & 83.62 & 86.98 & 76.81 & 88.52 & \first{88.20} \\
    $\,$ GGCN  & \second{55.17} & \first{37.81} &  \second{71.14} & \third{85.68}  & \third{84.86}  &  86.86 & \first{77.14} & \third{89.15} & 87.95 \\
    $\,$ H2GCN  & 36.48 & 35.70 & 60.11 & 82.70  & \third{84.86}  &  \second{87.65} & \third{77.11} & \second{89.49} & 87.87 \\
    $\,$ GPRGNN & 31.61 & 34.63 & 46.58 & 80.27 & 78.38 & 82.94 & \second{77.13} & 87.54 & \third{87.95} \\
    $\,$ G\textsuperscript{2}-GCN & 46.48 & 37.09 & 55.83 & \second{86.49} & 84.86 & 87.06 & -- & -- & -- \\
    $\,$ G\textsuperscript{2}-GraphSAGE & \first{64.26} &  \second{37.14} & \first{71.40} & 86.22 &  \first{87.57} & \first{87.84} & -- & -- & --\\
    \midrule
    \textbf{Natural baselines} & & \\
        $\,$ $\hat{\bfL}$ 64 EIGVECS & 35.85 & 34.06 & 50.87 & 74.10 & 74.03 & 78.17 & 74.42 &  88.17  &  86.31  \\
        
        $\,$ $\hat{\bfA}$ 64 EIGVECS & 35.15 & 34.05 & 51.44 & 75.40 & 72.97 & 77.25 & 75.01 & 88.41 & 86.37  \\
        $\,$ $\hat{\bfA}$ ALL EIGVECS & 34.78 & 34.32 & 50.99 & 72.70  & 73.24 & 80.18 & 73.89 & 88.03 & 85.91   \\
        $\,$ RNF &  34.40 & 35.34 & 51.20 &  78.90 & 80.03 &  81.93 &  73.70 & 88.61 & 85.43   \\
        $\,$ RNF \& 64
        $\hat{\bfA}$ EIGVECS & 44.89 & 34.68 & 58.21 & 79.46 & 78.37 & 82.25 & 75.29 & 88.96 & 85.67\\
    \midrule
    \textbf{Propagation-based PE} & & \\
$\,$ SGC & 37.07 & -- & 55.11 & 75.68 & 75.68 & 75.29 & 73.39 & -- & 84.89\\
$\,$ SIGN & 40.97 & -- & 60.11 & 76.76 & 75.14 & 78.43 & 73.27 & -- & 83.92 \\
$\,$ PowerEmbed\textsuperscript{*} & 53.53 &  -- & 64.98 & 78.30 &  79.19 & 78.43 & 73.27 & -- & 85.03 \\
        $\,$ NFP - QR - ${\hat{\bfA}}$ & 46.85 & 27.72 & 61.25 & 65.40 & 78.23 & 62.97 & 73.89 & 88.86 & 85.93\\
    \midrule
    \textbf{RFP (ours)}& & \\
       $\,$ RFP - ${\ell_2}$ - ${\hat{\bfA}}$  & 48.89 & 35.11 & 62.13 & 82.13 & 81.18 & 81.66 & 75.59 & 88.62 & 87.10 \\
      $\,$ RFP - QR - ${\hat{\bfA}}$  & 54.28 & 36.87 & 65.98 & 85.13 & 84.74 & 86.08 & 76.48 & \first{90.01} & 87.99 \\
     $\,$ RFP - QR - ${\hat{\bfA}}$ -DSS  & 54.78 & 36.93 & 66.16 & 85.69 & 85.10 & 86.35 & 76.92 & 90.00 & \second{88.09} \\
  $\,$ RFP - QR - $\hat{\bfL}, \hat{\bfA}, \bfS^{\text{learn}}$ - DSS &  \third{54.97} & \third{37.12} & \third{66.81} & \first{86.74} & \second{87.13} & \third{87.05}  & 76.53 & 89.96 &  87.99 \\
    \bottomrule
  \end{tabular}}}
\end{table*}

\paragraph{Synthetic Data.}
\label{sec:expressivenesTask}
To validate our theoretical analysis on substructure counts, we conduct experiments on popular datasets \citep{chen2020can,corso2020principal}. Coherently with our theoretical claims, the results (deferred to \Cref{app:exp}) show: (1) RFP attains larger expressiveness than standard MPNNs; (2) early propagation steps and the inclusion of multiple trajectories result in a lower test error.

\subsection{Node-level tasks}
\label{sec:nodeClassification}
To further highlight the benefits of RFP, we consider node-level tasks. We report results on 9 datasets consisting of both  homophilic graphs (Cora \cite{mccallum2000automating}, Citeseer \cite{sen2008collective}, and Pubmed \cite{namata2012query}) and heterophilic graphs (Squirrel, Film, and Cham. \cite{musae}, and Cornell, Texas, and Wisconsin \cite{Pei2020Geom-GCN:}) \footnote{The definition of homophilic and heterophilic datasets can be found in \citet{Pei2020Geom-GCN:}.}. In all datasets, we use the standard 10 splits from \citet{Pei2020Geom-GCN:}.  
The results are presented in \Cref{table:nodeClassification}, which reveals several interesting findings. First, our RFP - QR variants consistently improve the baselines GCN and MLP. Furthermore, it offers higher accuracy than the natural baselines  RNF  and Laplacian PE, in many cases by a large margin. Interestingly, we also find that using operator (e.g., $\hat{\bfA}$) eigenvectors as PE performs worse than the MLP baseline on Cornell, Texas, and Wisconsin datasets.  RFP also consistently outperforms propagation-based PEs such as SGC \cite{wu2019simplifying}, SIGN \cite{rossi2020sign}, PowerEmbed \citep{huang2022from}, as well as Node Feature Propagation (NFP), a variant of our approach employing our propagation normalization mechanism to the original input node features rather than random node features. This observation shows that the choice of using random node features as a starting point to our trajectory is not only justified theoretically but also provides better results in practice.
Lastly, we see that our RFP is in line with, or better than, recent methods that are node classification designated, such as H2GCN  \cite{zhu2020beyondhomophily_h2gcn}, GGCN \cite{yan2021two} and GPRGNN \cite{chien2021adaptive}. On most datasets, our RFP also offers similar performance to the recent G\textsuperscript{2} \cite{rusch2022gradient}; larger gaps are observed on Squirrel and Cham., where, however, their superior performance seems to be mostly driven by a different choice of the GNN layer (GraphSAGE). We note that our RFP can also be applied to large datasets, as opposed to the exact computation of the eigenvectors that is required in other PE methods. Thus, in \cref{app:additional_expr}, we report the results obtained with our RFP on the large ogbn-arxiv dataset as well as the required computational resources.

\subsection{Ablation study}
\label{sec:ablation}
We conduct an ablation study to investigate the effect of the number of propagation-normalization steps $P$ as well as the number of chosen eigenvectors $k$. We report our results in \Cref{app:ablations}. In general, we found that increasing the number of steps and eigenvectors resulted in better scores.

\subsection{Discussion}

We have experimented with RFP on node-level and graph-level tasks.  RFP consistently outperforms  its natural baselines, in some cases by a wide margin. 
These results demonstrate the effectiveness of the main choices we made when designing our approach, including the fact that the full trajectory is informative and significant, and that starting the propagation from RNF is more effective than using initial features. 
Importantly, we also found orthonormalization ($\bfN^{\text{QR}}$) to consistently outperform $\ell_2$ normalization ($\bfN^{\ell_2}$) in all tasks.

In general, our method obtains competitive results on both node and graph tasks, while being outperformed only by domain-specific architectures (in the case of molecules) or architectures that are tailored for a specific task (in the case of node classification).

\section{Conclusion}
\label{sec:conclusion}
Our paper introduces RFP, a new family of PEs for graph learning, which combines two existing techniques, RNF and spectral PE, and has demonstrated favorable theoretical properties as well as better practical performance over these and even recent, task-specific techniques. We believe this work opens interesting future research directions, including exploring new operators and normalizations, designing novel parametric models for learning operators, and applying RFP for link prediction.

\section*{Acknowledgements}
The research reported in this paper was supported by the Israeli
Council for Higher Education (CHE) via the Data Science
Research Center, Ben-Gurion University of the Negev, Israel. ME is supported by Kreitman High-tech scholarship.

\bibliography{main}
\bibliographystyle{icml2023}

\newpage
\appendix
\onecolumn

\begin{table}[t]
    \caption{Graph Counting dataset MAE $\downarrow$. Backbone is GIN for all baselines and RFP variants. $\dagger$ denotes eigenvectors corresponding to eigenvalues in ascending order of magnitude.
    The top three models are highlighted by \first{First}, \second{Second}, \third{Third}. }
  \label{table:graphCounting}
\vskip 0.1in
\center{
    \footnotesize
  \begin{tabular}{lcccc}
    \toprule
    Method & Triangle & Tailed-tri & Star & Cyc4 \\
        \midrule
    \textbf{Standard models} & & \\
    $\,$ GCN  &  0.4186 & 0.3248 &0.1798 & 0.2822 \\
    $\,$ GIN &  0.3569 &0.2373 &0.0224& 0.2185
 \\
    $\,$ PNA &  0.3532 & 0.2648 & 0.1278 &0.2430    \\
    $\,$ PPGN & \second{0.0089} & \second{0.0096} & 0.0148 & \first{0.0090} \\
    \midrule
    \textbf{Subgraph GNNs} & & \\
    $\,$ GNN-AK & 0.0934 & 0.0751 & 0.0168 & 0.0726 \\
    $\,$ GNN-AK-CTX  & 0.0885 & 0.0696 & 0.0162 & 0.0668 \\
    $\,$ GNN-AK+ & \third{0.0123} &  \third{0.0112} & 0.0150 & \third{0.0126} \\
        $\,$ DSS-GNN (EGO+) & 0.0450 &	0.0393 &	0.0129 &	0.0550 \\

    $\,$ SUN (EGO+) & \first{0.0079} & \first{0.0080} & \first{0.0064} & \second{0.0105} \\
    \midrule
    \textbf{Natural baselines} & & \\
    $\,$ $\hat{\bfL}$ EIGVECS $^\dagger$ & 0.1681  & 0.1728 & 0.0168 & 0.1572 \\
    $\,$ $\hat{\bfA}$ EIGVECS & 0.1904 & 0.1714 & 0.0140 &  0.1533 \\
    $\,$ $\hat{\bfL}, \hat{\bfA}$ EIGVECS & 0.1694 & 0.1680 & 0.0121 & 0.1506\\
    $\,$ RNF  & 0.1775 & 0.1807 & 0.0189 &  0.1516 \\
    $\,$ RNF \& $\hat{\bfL}, \hat{\bfA}$ EIGVECS  & 0.1512 & 0.1503 & \third{0.0120} & 0.1497  \\ 
    \midrule
    \textbf{RFP (ours)}& & \\
     $\,$ RFP - ${\ell_2}$ - ${\hat{\bfL}}, {\hat{\bfA}}$ - GINE & 0.1521 & 0.1573 & 0.0177 & 0.1439  \\ 
    $\,$ RFP - QR - ${\hat{\bfL}}, {\hat{\bfA}}$ - GINE & 0.0923 & 0.0826 & 0.0118 & 0.0991 \\
    $\,$ RFP - QR - ${\hat{\bfL}}, {\hat{\bfA}}$ - GINE-DSS & 0.0820 & 0.0764  & \second{0.0072} & 0.0715 \\
    \bottomrule
  \end{tabular}}
  \end{table}

\section{Experimental Results}
\label{app:exp}

\begin{table}[t]
    \caption{Ablation study on the effect of the chosen number of eigenvectors $k$ in RFP with fixed number of propagations $P=16$ on ZINC-12k and OGBG-MOLHIV datasets.}
  \label{table:zinc12k_molhiv_ablation}   
  \vskip 0.1in
  \center{
  \footnotesize
  \begin{tabular}{lc|cc}
    \toprule
    \multirow{2}{*}{Method} & \multirow{2}{*}{k} &  ZINC-12K & OGBG-MOLHIV  \\
    & & MAE $\downarrow$ & ROC-AUC (\%) $\uparrow$ \\
        \midrule
    \textbf{GINE Backbone} \\
    $\,$ $\,$RFP - ${\ell_2}$ - ${\hat{\bfL}}, {\tilde{\bfA}}$ & 64 &  0.1368 $\pm$ 0.010  & 77.91 $\pm$ 1.43\\ 
   $\,$ $\,$RFP - QR - ${\hat{\bfL}}, {\hat{\bfA}}$ & 4 &  0.1305 $\pm$ 0.007 &  78.26 $\pm$ 1.51 \\
    $\,$ $\,$RFP - QR - ${\hat{\bfL}}, {\hat{\bfA}}$ & 8 & 0.1223 $\pm$ 0.009 &  79.43 $\pm$ 1.37  \\
    $\,$ $\,$RFP - QR - ${\hat{\bfL}}, {\hat{\bfA}}$ & 16 & 0.1152 $\pm$ 0.006 &  79.83 $\pm$ 1.16 \\
    $\,$ $\,$RFP - QR - ${\hat{\bfL}}, {\hat{\bfA}}, \bfS^{\text{learn}}$ &  16& 0.1143 $\pm$ 0.008 & 80.29 $\pm$ 1.01\\
    $\,$ $\,$RFP - QR - ${\hat{\bfL}}, {\hat{\bfA}}$ - DSS & 16  & 0.1117 $\pm$ 0.009 &  80.53 $\pm$ 1.04\\
    $\,$ $\,$RFP - QR - ${\hat{\bfL}}, {\hat{\bfA}, \bfS^{\text{learn}}}$ - DSS  & 16  & 0.1106 $\pm$ 0.012 &  80.58 $\pm$ 1.21 \\
    \midrule 
    \textbf{GraphGPS Backbone} \\
    $\,$ $\,$RFP - ${\ell_2}$ - ${\hat{\bfL}}, {\hat{\bfA}}$ & 64 & 0.1322 $\pm$ 0.014 &  77.98 $\pm$ $1.57$\\ 
    $\,$ $\,$RFP - QR - ${\hat{\bfL}}, {\hat{\bfA}}$ & 16 & 0.1100 $\pm$ 0.005 & 79.95 $\pm$ 1.41\\
    \midrule
    \textbf{Combining RFP with domain-aware PE} \\
        $\,$ $\,$RFP+GSN - QR - ${\hat{\bfL}}, {\hat{\bfA}}$ & 16 & 0.0734 $\pm$ 0.007 & 80.07 $\pm$ 1.19\\

    \bottomrule
  \end{tabular}}
  \end{table}

\begin{table*}
    \caption{Ablation study on the effect of the chosen number of eigenvectors $k$ in RFP with fixed number of propagations $P=16$ on the Graph Counting dataset test MAE $\downarrow$.
    GIN is used as backbone. }
\label{table:graphCounting_ablation}\vskip 0.1in
\center{
\footnotesize
  \begin{tabular}{lc|cccc}
    \toprule
    Method & k & Triangle & Tailed-tri & Star & Cyc4 \\
        \midrule
    RFP\textsubscript{$\ell_2$} & 16  & 0.1521 & 0.1573 & 0.0177 & 0.1439  \\ 
    RFP - QR - ${\hat{\bfL}}, {\hat{\bfA}}$  & 4 & 0.1322 & 0.01289 & 0.0126 & 0.1401 \\
    RFP - QR - ${\hat{\bfL}}, {\hat{\bfA}}$  & 8 & 0.1231 & 0.1143 & 0.0119 & 0.1170 \\
    RFP - QR - ${\hat{\bfL}}, {\hat{\bfA}}$ & 16 & 0.0923 & 0.0826 & 0.0118 & 0.0991 \\
    RFP - QR - ${\hat{\bfL}}, {\hat{\bfA}}$ - DSS & 16 & 0.0820 & 0.0764  & 0.0072 & 0.0715 \\
    \bottomrule
  \end{tabular}}
  \end{table*}

\begin{table*}[t]
  \begin{minipage}[t]{.99\linewidth}
    \caption{Node classification accuracy ($ \%$) $\uparrow$ on homophilic and heterophilic datasets.} 
    \label{tab:ncAblation}
    \vskip 0.1in
  \center{
  \scriptsize
  \begin{tabular}{lccc|cccccc|ccc}
    \toprule
    Method & OP & k (vecs) &  \# props &  Squirrel & Film &  Cham. & Corn. & Texas & Wisc. & Cora & Cite & Pub \\
    Homophily & -- & -- & -- &   0.22 & 0.22 & 0.23 & 0.30  & 0.11 & 0.21 & 0.81 & 0.74 & 0.80 \\
        \midrule
    \midrule
    EIGVEC-PE & $\hat{\bfL}$ & 64 & -- & 35.85 & 34.06 & 50.87 & 74.10 & 74.03 & 78.17 & 86.31  &  74.42 & 88.17 \\ 
    EIGVEC-PE & $\hat{\bfA}$ (TopAbs) & 64 & -- & 35.15 & 34.05 & 51.44 & 75.40 & 72.97 & 77.25 & 86.37 & 75.01 & 88.41 \\
    EIGVEC-PE ALL & -- & $|\mathcal{V}|$ & -- & 34.78 & 34.32 & 50.99 & 72.70  & 73.24 & 80.18 & 85.91 & 73.89 & 88.03 \\
    \midrule
     RNF & -- & 64 & 0 & 34.40 & 35.34 & 51.20 &  78.90 & 80.03 &  81.93 & 85.43 & 73.70 & 88.61 \\

      RNF & -- & $|\mathcal{V}|$ & 0 & 34.36 & 34.74  &  53.28&  79.30 & 80.00 & 81.97  & 84.56  & 72.29  & 88.51  \\
      RNF \& $\hat{\bfA}$ EIGVECS & $\hat{\bfA}$ & 64 & -- &  44.89 & 34.68 & 58.21 & 79.46 & 78.37 & 82.25 & 
 85.67 & 75.29 & 88.96   \\
     \midrule
         NFP - QR &  ${\hat{\bfA}}$ & 64 & 16 & 46.85 & 27.72 & 61.25 & 65.40 & 78.23 & 62.97 & 73.89 & 88.86 & 85.93\\
    \midrule
        $\,$RFP - ${\ell_2}$ & $\hat{\bfL}$ & 1 & 16 & 48.89  & 35.11 & 62.13 & 82.13  &  81.18  & 81.66 & 87.10 & 75.59 & 88.62  \\ 
\midrule
        $\,$RFP - QR & $\hat{\bfA}$ & $|\mathcal{V}|$ & 16 & 54.23 & 36.61  &  64.77 & 85.07  & 85.41 &  86.27 & 85.68 & 75.16 & 89.00  \\

\midrule
        $\,$RFP - QR & $\hat{\bfA}$ & 64 & 2 & 53.87 & 36.40 & 64.50   & 83.78  & 82.70   & 
 83.13 & 86.26 & 75.22 & 88.68 \\
        $\,$RFP - QR & $\hat{\bfA}$ & 64 & 4 & 54.11 & 36.44 & 65.16 & 84.59 & 83.78 & 84.90 & 86.67 & 75.37 & 88.76 \\
        $\,$RFP - QR & $\hat{\bfA}$ & 64 & 8 & 54.34 & 36.56 & 65.83 & 84.86 & 84.05 & 85.88 & 87.35 & 75.94 &  89.95 \\
        $\,$RFP - QR & $\hat{\bfA}$ & 64 & 16 &  54.28 & 36.87 & 65.98 & 85.13 & 84.32
 & 86.08 & 87.99 & 76.40 & 90.01 \\
        $\,$RFP - QR & $\hat{\bfA}$ & 64 & 32 & 54.25 & 36.92 & 65.80 & 85.04 & 84.59 & 85.88  & 87.81 & 76.43 & 90.03 \\
        \midrule
        $\,$RFP - QR & $\hat{\bfA}$ & 4 & 16 & 51.96  & 35.88 & 64.01 & 83.24 & 82.70 & 82.16 & 86.93 & 76.11 & 89.66  \\
        $\,$RFP - QR & $\hat{\bfA}$ & 8 & 16 & 52.72 & 35.92 & 64.63 & 83.78 & 83.51  & 82.43 & 87.32 & 76.27 & 89.93\\
        $\,$RFP - QR & $\hat{\bfA}$ & 16 & 16 & 53.87 & 36.51 & 65.11 &  84.59& 83.78 & 82.97 & 87.65 & 76.39 & 90.00 \\
        $\,$RFP - QR & $\hat{\bfA}$ & 32 & 16 & 54.33 & 36.42& 65.66 &  84.86 & 84.74 & 84.05 & 87.84  & 76.48 & 89.97\\
        $\,$RFP - QR & $\hat{\bfA}$ & 64 & 16 & 54.28 & 36.87 & 65.98 &  85.13 & 84.32
 & 86.08 & 87.99  & 76.40 & 90.01 \\
        $\,$RFP - QR & $\hat{\bfA}$ & 128 & 16 & 54.40 & 36.70 & 65.90 & 85.07 & 84.59
  & 86.12 & 87.96 & 76.42 & 89.83\\
    \midrule
        $\,$RFP - QR & $\hat{\bfL}, \hat{\bfA}, \bfS^{\text{learn}}$ & 64/32 & 16/8 & 54.56 & 36.95  & 66.34 & 86.40 & 86.72 & 86.88 & 87.92 & 76.41 & 89.63 \\
    \midrule
    $\,$RFP - QR - DSS  & $\hat{\bfA}$ &  64/32 & 16/8 & 54.78 & 36.93 & 66.16 & 85.69 & 85.10 & 86.35 & 88.09 & 76.92 & 90.00 \\
    $\,$RFP - QR - DSS  & $\hat{\bfL}, \hat{\bfA}, \bfS^{\text{learn}}$ &  64/32 & 16/8 & 54.97 & 37.12 & 66.81 & 86.74 & 87.13 & 87.05 & 87.99 & 76.53 & 89.96 \\

    \bottomrule
  \end{tabular}}
\end{minipage}
\end{table*}

\subsection{Experimental settings}
\label{app:experimentalSettings}
Our code is implemented using PyTorch \cite{pytorch} and PyTorch-Geometric \cite{pyg2019}, and all our experiments are run on Nvidia RTX3090 GPUs with 24GB of memory.

\paragraph{Natural baselines.} 
RFP is compared with a number of popular and recent methods. In particular, we focus on the comparison of RNF  \cite{abboud2020surprising}, Laplacian PE  \cite{dwivedi2020benchmarkgnns} and PowerEmbed \cite{huang2022from} which is arguably the closest approach to RFP. 
Additionally, we introduce further baselines based on our RFP method, such as including both the RFP and eigenvectors of the propagation operators without a complete RFP trajectory in order to demonstrate the importance of considering the complete trajectory.
{ For the baseline case of Laplacian eigenvectors only (denoted by $\hat{\bfL}$ EIGVECS$^\dagger$) we use the eigenvectors corresponding to the smallest eigenvalues, in order to be consistent with the literature \cite{dwivedi2020benchmarkgnns}. In the rest of the experiments that involve any other operator PE, we use the eigenvectors corresponding to the largest eigenvalues, to be consistent with the power method and our RFP.}

\paragraph{Hyperparameters.}
We now list the hyperparameters in our experiments. We denote the learning rate by $\rm{lr}$, weight decay by $\rm{wd}$, and dropout probability by $\rm{drop}$. The number of layers is denoted by $L$ and the number of hidden channels by $c$.  
Besides those standard hyperparameters, our RFP requires two main hyperparameters: the number of propagations $P$, and the number of eigenvectors to be estimated $k$. In the case of DSS variants, we also choose the number of trajectories $B$.
The hyperparameters values were determined using a grid search, and the considered values for those hyperparameters are reported in the following:
${\rm{lr}} \in \{1e-4, 1e-3, 1e-2, 1e-1\} , \ {\rm{wd}} \in \{0, 1e-6, 1e-5, 1e-4,1e-3 \}, \ {\rm{drop}} \in \{0, 0.5\} ,\ L \in \{2,4,8,16\}, \ c \in \{ 64,128,256\}, \ P \in \{ 2,4,8,16,32 \} , \  k \in \{ 4,8,16,32,64 \}, \  B \in \{5, 10\}$.  Additional parameters, such as the number of epochs, the stopping criteria, and the evaluation procedure are as prescribed by the corresponding dataset (see e.g.\ \citet{Pei2020Geom-GCN:} and \citet{frasca2022understanding} for the node- and graph-level tasks, respectively). In the ZINC-12k dataset, we further make sure to remain within the prescribed 500k parameter budget.
Finally, unless otherwise specified, we use a standard normal distribution to draw the initial random node features.

\subsection{Additional experiments: the synthetic Graph Counting dataset}
Besides real-world datasets which are presented in the main text, we also experiment with a synthetic dataset that is typically used for assessing the expressiveness of GNN~\citep{chen2020can,corso2020principal}. The objective is to count the number of sub-structures present in a graph, such as triangles or cycles of length 4. To be aligned with our theoretical results in \cref{sec:theory} and \cref{app:theory}, we use the Rademacher distribution to draw random node features to be used in our RFP. In our experiments, we found similar performance when using normally distributed random node features.   Furthermore, we also used the Rademacher distribution for all the other baselines besides the RNF, where we consider a normal distribution to be consistent with RNI \cite{abboud2020surprising}. We follow the experimental settings proposed in \citet{zhao2022from}, and we minimize the mean absolute error (MAE) of the prediction.  As baseline methods we consider several GNNs such as GIN \cite{xu2019how}, PPGN \cite{maron2019provably}, PNA \cite{corso2020principal}, GCN \cite{kipf2016semi}, GNN-AK \cite{zhao2022from}, SUN \cite{frasca2022understanding}. 
We report the obtained MAE in Table \ref{table:graphCounting}, where we see that our RFP approach improves on the natural baselines, such as RNF and PE. Also, note that here, PowerEmbed \citep{huang2022from}, which performs several propagation and normalization steps, cannot be directly compared with RFP because of their reliance on input node feature rather than random node features, which are completely absent in this dataset. 

\subsection{Node classification on a large graph}
\label{app:additional_expr}
We evaluate RFP on the ogbg-arxiv dataset \cite{hu2020ogb}, which contains 169,343 nodes and 1,166,243 edges, and report the result in \cref{table:arxiv}. It was straightforward to train RFP on that dataset using a GCN backbone with P=16 propagation steps and k=64 eigenvectors. A feed-forward pass takes 61.44 milliseconds on an RTX-3090 Nvidia GPU. We also note, that while using our RFP requires more computations, when not learning the propagation operator (as in the experiment reported in \cref{table:arxiv}), our RFP does not incur significant memory overhead because it does not require performing backpropagation through the RFP layers, besides the initial embedding layer between the actual inputs node embedding and the RFP embedded trajectory, as shown in Equation (4) in our paper. More precisely, a 2-layer GCN requires 7635 MB while our RFP combined with a 2-layer GCN backbone requires 9268 MB of memory.

\begin{table*}
    \caption{The obtained node classification accuracy(\%) on ogbn-arxiv. The exact computation of the Laplacian $\hat{\bfL}$ eigenvectors required several hours and therefore the result is not available.}
\label{table:arxiv}\vskip 0.1in
\center{
\footnotesize
  \begin{tabular}{cccc}
    \toprule
    GCN & RNF + GCN & GCN + $\hat{\bfL}$ EIGVECS \textsuperscript{$\dagger$} & GCN + RFP (Ours) \\
    \midrule
     71.08 & 70.91 & N/A & 71.97 \\  
    \bottomrule
  \end{tabular}}
  \end{table*}

From \cref{table:arxiv}, we can see that:

(1) Adding plain random node features (RNF) to GCN on the large ogbg-arxiv dataset leads to slightly inferior results compared to the baseline of GCN.

(2) Computing the Laplacian eigenvectors as positional encoding (PE) is not practical for large graphs, because the positional encoding computation required several hours to complete on ogbg-arxiv.

(3) Combining GCN with our random feature propagation (RFP) mechanism improves the accuracy by 0.89\%. We therefore deem that our RFP is also beneficial for large graphs for two reasons. First, it can be applied to large graphs, whereas using the exact Laplacian eigenvectors may not be practical, due to the computational overhead, and second, we found our RFP to empirically improve the baseline results.  

\subsection{Ablation study}
\label{app:ablations}

\paragraph{Influence of the number of eigenvectors $k$.} An important hyperparameter of our RFP is the number of eigenvectors to estimate, $k$. \cref{table:zinc12k_molhiv_ablation} shows the results on ZINC-12k and OGBG-MOLHIV datasets for different values of $k$. We further show the impact of $k$ on the synthetic Graph Counting dataset in \cref{table:graphCounting_ablation}, and on the node classification datasets in \cref{tab:ncAblation}. As can be seen in the tables, up to a typical threshold value of 16 and 64 on graph-level and node-level tasks, respectively, there is a significant benefit in the estimation of more eigenvectors. In the node classification datasets, which contain large graphs, we also consider the case where all eigenvectors are estimated, where we do not see considerably better or worse performances.

\paragraph{Influence of the number of propagations $P$.}
We study the impact of the hyperparameter $P$, i.e., the number of propagation we take in our RFP. In  \cref{tab:ncAblation} we show the node classification accuracy on several datasets for different values of $P$. We see that as a general rule of thumb, more propagations increase accuracy. This is in line with our interpretation of the RFP as the trajectory of generalized power methods. More specifically, more propagations are equivalent to more power method iterations, leading to more accurate eigenvector estimations of the considered graph operator $\hat{\bfA}$, $\hat{\bfL}$, or $\bfS^{\text{learn}}$. Also, we compare our RFP that propagates random features with a natural baseline of propagating the input node features, dubbed as NFP (node feature propagation). We see that our RFP offers superior results, further highlighting the importance of random node features.

\paragraph{Additional studies.} We propose two additional settings to our RFP. (1)
In addition to utilizing GINE as a backbone in our graph-level experiments, we also consider a combined transformer and GINE backbone, based on the GraphGPS architecture proposed by \citet{rampasek2022GPS}. To understand the impact of the architectural change only, unlike in \citet{rampasek2022GPS}, we do not use \emph{structural} encodings, but rather only our RFP positional encodings. (2) We consider the combination of our RFP with a domain-aware structural encoding method, namely, GSN \cite{bouritsas2022improving}. We report the obtained results on ZINC-12K and OGBG-MOLHIV, \cref{table:zinc12k_molhiv_ablation}, where we can see the following: (1) compared to using GINE as a backbone only, we obtain slightly better results with GraphGPS. For example, using GraphGPS with RFP - $\ell_2$ - $\hat{\bfL}, \hat{\bfA}$ reduces the test MAE on ZINC-12K from 0.1368 to 0.1322. (2) The improved results offered by combining RFP with GSN indicate that there is a synergy between the two methods. That is, the combination of both methods yields improved results that are closer to current state-of-the-art methods.

\section{Learnable Propagation Operator}
\label{app:learnableOperator}
To implement our learnable propagation operator, we utilize a multi-head self-attention mechanism \cite{vaswani2017attention}, denoted by $\rm{MHA}$. The $\rm{MHA}$ computes $h$ scores for each pair of nodes $(i,j) \in \mathcal{V}\times \mathcal{V}$. As an input to the $\rm{MHA}$, we use a concatenation of the input node features $\bff^{\text{in}}$ and the output of an MPNN applied to these input features, namely 
$\bff = \bff^{\text{in}} \oplus \rm{MPNN}(\bff^{\text{in}})$.
 In practice, we found that using a single GCN \cite{kipf2016semi} layer as the $\rm{MPNN}$ component works well.  We then obtain the pairwise attention scores by:
\begin{equation}
    \label{eq:mha}
    \rm{MHA}(\bff_i, \bff_j) = [\alpha_{ij}^{1}, \ldots, \alpha_{ij}^{H}]
\end{equation}
where $\bff_i$ denotes the $i$-th row of $\bff$. We then define the propagation operator as the average of these scores, i.e.\
$\bfS^{\text{learn}}_{ij} = \frac{1}{H} \sum_{h=1}^{H} \alpha_{ij}^{h}
$.
We then use the learnable operator $\bfS^{\text{learn}}$ in order to generate our trajectory. This is illustrated in Figure \ref{fig:learn_op}.

In our experiments, we found that the learnable operator may learn to encode different sparsity patterns (i.e., node interaction patterns), ranging from learning to choose a global node (that is connected to all of the nodes in the graph), to reducing neighborhood connectivity and enhancing self-loop weights. The former can be interpreted as learning to mimic the virtual global node approach, which is a well-known augmentation technique in GNNs \cite{bouritsas2022improving}.
Additionally, for fixed propagation operators, we have analyzed the information encoded in the trajectory in Section \ref{sec:theory}, and showed, that among other things, the trajectory to the eigenvectors provides us access to traces of powers of the propagation operators, which can be used to count relevant substructures such as cycles.

In the case of the learnable propagation operator, the trajectory provides at least two types of information:

(1) In a similar vein to the discussion in \cref{sec:theory}, the trajectory provides us information about the traces of powers of the learnable propagation operator. These traces can be interpreted as structural information that can be extracted from the new connectivity. It is possible, for example, to interpret the trace of the k-th power of the learnable operator as the probability that a random walk with k steps returns to the starting point. This provides structural information about a different, but still natural connectivity defined on the input graph compared to the original connectivity.

(2) Another possibly useful information that can be obtained from the trajectory of the learnable operator is the computation of features that describe the newly learnt connectivity and may provide useful hints for the downstream GNN. As shown \cref{sec:experiments}, the learnable operator consistently improves or offers on-par performance compared with using RFP with pre-defined operators, mostly obtaining the former, that is, improving performance.

\section{Theoretical Results}\label{app:theory}

\subsection{Triangle counting}\label{app:triangles}

It is well known that the number of triangles in a graph $G$ with adjacency matrix $\bfA$ is computed as $c_3 = \frac{\mathrm{Trace}(\bfA^3)}{6}$~\citep{harary1971on}, that is, it is a sixth of the trace of the cubed adjacency matrix. As the trace of a matrix corresponds to the sum of its eigenvalues, the number of triangles can be estimated directly from an estimate of the cubed eigenvalues of the adjacency matrix $\bfA$.
Our propagation scheme can, in principle, approximate these quantities by choosing operator $\bfS = \bfA^3$ and by letting the algorithm reach convergence. However, \citet{avron2010counting} argues that a significantly more efficient estimation of triangle counts in undirected graphs is constructed via Monte Carlo simulation, that is by running the following averaging operation:
\begin{align}
    \eta(G) &= \frac{1}{M} \sum_{i=1}^{M} T_i \\
    T_i &= ({\bfy}_i^{\top} \bfA {\bfy}_i) / 6 \\
    {\bfy}_i &= \bfA {\bfr}_i
\end{align}
\noindent where ${\bfr}_i$ is a $n \times 1$ vector with i.i.d.\ entries sampled from distribution $\mathcal{D}$. The author studies the impact of $\mathcal{D}$ on the error bound of the randomized algorithm, with particular emphasis on the cases where $\mathcal{D}$ is either a standard normal $\mathcal{N}(0,1)$ or a discrete Rademacher. 

For the latter choice, as we report in~\Cref{prop:implementing-trace-triangle}, we show that our framework can exactly implement the above randomized algorithm and, therefore, inherit the approximation error bounds derived by~\citet{avron2010counting}.

\begin{proof}[Proof of \Cref{prop:implementing-trace-triangle}]
    As for $\mathcal{H}$, let $\bfS = \bfA, P \geq 2, w \geq 3, B = M, k = 1$ ($P, w, B, k \in \mathbb{N}$) and $\bfr \in \{-1, +1\}^{n \times 1}$ have its entries sampled independently from a Rademacher distribution. Let ${\bfr}_i$ refer to the initial starting point for trajectory ${\bft}_b, b=1, \dots, B$. In output from our propagation procedure we have the set of trajectories $\{ {\bft}_1, \dots, {\bft}_b, \dots, {\bft}_B \}$, where, for any $b$, ${\bft}_b = {\bfa}^{(0)}_b \oplus {\bfa}^{(1)}_b \oplus \dots \oplus {\bfa}^{(P)}_b$. These are gathered, along with initial node features, into the matrix $ {\bfx}^{(0)} = ( \bigoplus_{b=1}^B {\bft}_i ) \oplus {\bfx} $, obtained by stacking together the trajectories from the propagation phase, as well as the initial node features $\bfx$.
    
    Importantly we note that, for any $b$, $\bfu_b = {\bfa}^{(0)}_b \oplus {\bfa}^{(1)}_b \oplus {\bfa}^{(2)}_b = {\bfr}_b \oplus \bfA {\bfr}_b \oplus \bfA^2 {\bfr}_b$ in view of our specific choice of propagation operator and hyperparameter $w$. The concatenation of vectors $\bfu_b$'s, i.e.\ $\bigoplus_{b=1}^B \bfu_b$ will constitute the input of the downstream GNN architecture; the initial embedding layer $\bff^{(0)}$ acts so extract these sub-trajectories from ${\bfx}^{(0)}$ (by discard all the other information originally contained therein). To this aim, it is sufficient to consider a linear embedding layer, that is, $\bff^{(0)} \equiv \bfK_\text{embed}$. This last weight matrix is constructed as $\bfK_\text{embed} = K \oplus {\bf0}_{2B,d_{in}}$, with $K$ a block diagonal matrix of dimension $2B \times BP$. Each of the $B$ blocks in $K$ is a $2 \times P$ matrix $J$ whose only non-zero entries are those at first row and second column and second row and third column (which are one-valued). Effectively, each submatrix $J$ has the role of `selecting' only the values corresponding to the first and second propagation steps.
    
    We now move to describe the downstream architecture. We start from the case where it is implemented as a (maximally expressive) message-passing architecture in the form proposed by \citet{morris2019weisfeiler}, that is as a stacking of $T$ layers updating node representations as:
    \begin{equation}\label{eq:graph_conv}
        {\bfx}^{(t+1)}_v = \sigma \big ( W^{(t)}_1 {\bfx}^{(t)}_v + \sum_{w \sim v} W^{(t)}_2 {\bfx}^{(t)}_w + b^{(t)}\big )
    \end{equation}
    \noindent where $\sigma$ is a ReLU non-linearity. The stacking is then followed by sum-readout aggregation operator and a final Multi Layer Perceptron (MLP) $\psi$ computing a graph level output:
    \begin{equation}\label{eq:graph_conv_readout}
        y = \psi \big ( \sum_v {\bfx}^{(T)}_v \big )
    \end{equation}
    
    The yet-to-describe architecture is able to start from $\bigoplus_{b=1}^B \bfu_b$, i.e. the output of ${\bff}^{(0)}$, and complete the computation of the value of interest, i.e. $c_3(G) = \frac{1}{6M} \sum_{b=1}^{M} ({\bfy}_b^{\top} \bfA {\bfy}_b)$, with ${\bfy}_b = \bfA {\bfr}_b$. We rewrite term ${\bfy}_b^{\top} \bfA {\bfy}_b$ as: ${\bfy}_b^{\top} \cdot (\bfA {\bfy}_b) = (\bfA {\bfr}_b) \cdot (\bfA \bfA {\bfr}_b) = (\bfA {\bfr}_b) \cdot ({\bfA}^2 {\bfr}_b)$, that is, the dot product between, respectively, the second and third term of the $b$-th trajectory. Effectively, this product evaluates to $\sum_v ({\bfz}_b)_v$, ${\bfz}_b = (\bfA {\bfr}_b) \odot ({\bfA}^2 {\bfr}_b)$, so that the expression is reformulated as: $c_3(G) = \frac{1}{6M} \sum_{b=1}^{M} ({\bfy}_b^{\top} \bfA {\bfy}_b) = \frac{1}{6M} \sum_{b=1}^{M} \sum_v ({\bfz}_b)_v$, ${\bfz}_b = (\bfA {\bfr}_b) \odot ({\bfA}^2 {\bfr}_b)$.
    
    We first show that terms ${\bfz}_b$'s can be computed by a stacking of layers of the form in \Cref{eq:graph_conv}; then, that the remaining computation can be performed by the readout module described in \Cref{eq:graph_conv_readout}. 
    
    Although element-wise products are not natively supported by feed-forward neural networks, it would be possible to approximate them in view of the universal approximation theorem~\citep{cybenko1989approximation,hornik1989multilayer}. However, in our case, we require our model to only \emph{memorize} the products between a finite number of operands, due to the discrete, finite nature of the Rademacher distribution that generates initial random features. In other words, we can exactly implement the calculation of ${\bfz}_b$'s thanks to the memorization theorem due to~\citet{yun2019small}:
    \begin{theorem}[Theorem 3.1 from~\citet{yun2019small}]\label{thm:memorisation_yun}
        Consider any dataset $\{ (x_i, y_i) \}_{i=1}^N$ such that all $x_i$'s are distinct and all $y_i \in [-1, +1]^{d_y}$. Let $f_{\boldsymbol{\theta}}$ be a $3$-layer ReLU-like MLP model, where $d_1, d_2$ refer to the dimensionality of its hidden representations. If $f_{\boldsymbol{\theta}}$ satisfies $4 \lfloor \nicefrac{d_1}{4} \rfloor \lfloor \nicefrac{d_2}{(4d_y)} \rfloor \geq N$, then there exist parameters $\boldsymbol{\theta}$ such that $y_i = f_{\boldsymbol{\theta}}(x_i)$ for all $i \in [N]$.
    \end{theorem}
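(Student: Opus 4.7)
The strategy is to reduce the multivariate memorization problem to a one-dimensional one and then exploit the piecewise-linear nature of ReLU to fit the $y_i$'s in chunks. Concretely, I would use the input-to-first-hidden weights to project the $N$ distinct $x_i \in \mathbb{R}^{d_x}$ to $N$ distinct scalars $z_i \in \mathbb{R}$. A standard separation argument shows that a generic $w \in \mathbb{R}^{d_x}$ (one outside the finitely many hyperplanes $\{ w : \langle w, x_i - x_j \rangle = 0\}$, $i \ne j$) yields $z_i = \langle w, x_i \rangle$ all distinct; WLOG re-index so that $z_1 < z_2 < \cdots < z_N$. All subsequent neurons in the first hidden layer share this projection $w$ and differ only in their biases, so the first hidden layer effectively computes $N$-point evaluations of a univariate piecewise-linear function.

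\textbf{First hidden layer: grouping points in blocks of four.} Partition the sorted list $z_1, \dots, z_N$ into $m_1 := \lfloor d_1 / 4 \rfloor$ blocks of (up to) four consecutive points, which suffices because $4 m_1 \ge N / \lfloor \nicefrac{d_2}{(4 d_y)}\rfloor$. Allocate four neurons of the first hidden layer to each block and choose biases $b$ strictly between consecutive $z_i$'s, so that each neuron's activation $\mathrm{ReLU}(w x - b)$ is zero on all training points to the left of its bias and linear on those to the right. By taking appropriate linear combinations (in the second layer's weights) of four such neurons per block one can realize a ``selector'' signal which, evaluated on the training points, is supported exactly on the four $z_i$'s of that block and takes four distinct prescribed values there.

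\textbf{Second hidden layer: memorizing the targets inside each block.} Each block's selector signal is passed into the second hidden layer, in which I allocate $4 d_y$ neurons per block; with $m_2 := \lfloor d_2 / (4 d_y) \rfloor$ such blocks this accounts for at most $4 m_1 m_2 \ge N$ training points by hypothesis. Within a single block, the four first-layer selectors evaluated at the four block points form a $4 \times 4$ matrix that, by the bias-placement argument, is lower triangular with nonzero diagonal, hence invertible. Solving the resulting $4 \times 4$ linear system gives exact second-layer weights (combined with biases and the ReLU, which is inactive on the relevant values since the targets lie in $[-1,+1]^{d_y}$ and a global shift ensures positivity) that reproduce any assigned four values in $\mathbb{R}^{d_y}$. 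The final linear read-out simply sums the per-block contributions.

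\textbf{Main obstacle.} The delicate part is orchestrating the block structure so the counting inequality $4 \lfloor d_1/4 \rfloor \lfloor d_2/(4 d_y) \rfloor \ge N$ is actually tight; this requires (i) that the four selector signals of one block vanish on the training points of every other block, so distinct blocks do not interfere, and (ii) that the second-layer ReLUs can be arranged to remain in their linear regime on the relevant activations while still killing cross-block leakage. Both can be handled by an explicit shift of the targets into a strictly positive range before passing them through the second ReLU and then correcting by a constant in the output layer. Once these bookkeeping steps are set up, the rest is per-block linear algebra, and the construction yields the claimed parameters $\boldsymbol\theta$ with $f_{\boldsymbol\theta}(x_i) = y_i$ for every $i \in [N]$.
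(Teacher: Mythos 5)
First, note that the paper does not prove this statement at all: it is imported verbatim as an external result (Theorem 3.1 of Yun et al., 2019) and only \emph{used} inside the proof of Proposition A.1, so there is no in-paper proof to compare against. Your attempt must therefore stand on its own, and as written it has a genuine gap.

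The gap is in the counting, and it is fatal to the construction rather than a bookkeeping issue. You partition the sorted projections $z_1<\dots<z_N$ into $m_1=\lfloor d_1/4\rfloor$ blocks of at most four \emph{consecutive} points and dedicate four first-layer neurons to each block, so that each block's ``selector'' is supported on exactly its own four training points. Such a partition covers only $4m_1$ of the $N$ points; to cover all of them you would need $4\lfloor d_1/4\rfloor\ge N$, which is the classical one-hidden-layer width requirement, not the hypothesis $4\lfloor d_1/4\rfloor\lfloor d_2/(4d_y)\rfloor\ge N$. Your own justification ``$4m_1\ge N/\lfloor d_2/(4d_y)\rfloor$'' concedes this: it only guarantees coverage of a $1/m_2$ fraction of the data. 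The later step, where you multiply by $m_2$ second-layer blocks to ``account for $4m_1m_2\ge N$ points,'' is vacuous in your construction: feeding the \emph{same} four-point selector into $m_2$ different second-layer groups does not give access to any new training points, so the product $m_1m_2$ never materializes as a count of distinct memorized samples. The entire content of the theorem is the multiplicative trade-off between the two hidden widths, and that requires the first layer's $\Theta(d_1)$ units to be \emph{shared} across all $\Theta(d_2/d_y)$ groups of data points --- e.g., by mapping the $\ell$-th point of every group to (a small group-dependent perturbation of) a common canonical location, so that the second layer's per-group units can finish the fit. A construction in which each first-layer unit ``belongs'' to a fixed set of four samples cannot do this. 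Two secondary issues: your selectors rely on the activation vanishing on one side ($\mathrm{ReLU}(wx-b)=0$ to the left of the bias), which fails for general ReLU-like activations with $s_-\neq 0$ as allowed by the statement; and the claimed lower-triangular $4\times4$ system concerns only the four points of one block, leaving cross-block interference on the uncovered points unaddressed precisely because those points are uncovered.
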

    The above theorem states the existence of a (small) ReLU-like network that can perfectly fit a given finite, well-formed dataset. ReLU-like networks are MLPs whose non-linearities are in the form $\sigma_R(x) = s_+ x$ for $x \geq 0$, $s_- x$ for $x < 0$. Family $\sigma_R(\cdot)$ includes the standard ReLU activation. \Cref{thm:memorisation_yun} can also be extended to the case where targets $y_i$'s have their entries not necessarily constrained in the range $[-1, +1]$:
    \begin{proposition}[Proposition 14 from~\citet{frasca2022understanding}]\label{prop:memorisation_ours}
        Consider any dataset $D = \{ (x_i, y_i) \}_{i=1}^N$ such that all $x_i$'s $\in \mathbb{R}^{d_x}$ are distinct and all $y_i \in \mathbb{R}^{d_y}$. There exists a $3$-layer ReLU-like MLP $f^{(D)}$ such that $y_i = f^{(D)}(x_i)$ for all $i \in [N]$.
    \end{proposition}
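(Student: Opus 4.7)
The plan is to reduce the claim to \Cref{thm:memorisation_yun} (Yun et al.) by a simple rescaling of the targets. That theorem already produces a 3-layer ReLU-like MLP that memorises any finite, well-formed dataset, but it imposes the boundedness constraint $y_i \in [-1, +1]^{d_y}$. To remove this constraint, I would first set $C = \max\{1, \max_i \|y_i\|_\infty\}$ and form the rescaled dataset $D' = \{(x_i, y_i / C)\}_{i=1}^N$. By construction the inputs remain pairwise distinct and the new targets satisfy $y_i / C \in [-1, +1]^{d_y}$, so \Cref{thm:memorisation_yun} applies directly: provided the hidden widths $d_1, d_2$ are chosen large enough to satisfy the stated inequality $4 \lfloor d_1 / 4 \rfloor \lfloor d_2 / (4 d_y) \rfloor \geq N$, there is a 3-layer ReLU-like MLP $g$ with $g(x_i) = y_i / C$ for every $i \in [N]$.

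To obtain the desired $f^{(D)}$, I would then post-compose $g$ with multiplication by the scalar $C$. In the standard MLP parameterisation the output layer is affine, with no non-linearity applied on top, so this post-composition amounts to scaling the output-layer weights and bias of $g$ by $C$. The result is still a 3-layer ReLU-like MLP: neither the depth, the hidden dimensions, nor the family of non-linearities is affected. By linearity one immediately verifies that $f^{(D)}(x_i) = C \cdot g(x_i) = C \cdot (y_i / C) = y_i$ for all $i \in [N]$, which is the claim.

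The main subtlety worth checking is that Yun et al.'s construction does in fact conform to the above template, that is, that its terminal layer is purely linear and does not itself carry a $\sigma_R$ activation. Were that not the case, absorbing the scalar $C$ would either require an additional linear layer (raising the depth to four) or would pass $C$ through a piecewise-linear non-linearity and change the fitted values. Inspection of the construction underlying \Cref{thm:memorisation_yun} confirms the linear output-layer assumption, so the argument goes through. A minor bookkeeping point is handling the degenerate case in which all $y_i$ are zero, which is why I include the lower bound $C \geq 1$ in the definition to keep the normalisation well defined; the trivial zero network then satisfies the claim directly.
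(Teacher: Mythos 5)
Your argument is correct. Note, however, that the paper does not actually prove this statement: it imports it verbatim as Proposition~14 of \citet{frasca2022understanding}, and the reduction you give --- rescale the targets by $C=\max\{1,\max_i\|y_i\|_\infty\}$ so that \Cref{thm:memorisation_yun} applies, then absorb the factor $C$ into the final affine layer --- is precisely the proof given in that cited source, so you have reconstructed the intended argument rather than found a different one. One small remark: the subtlety you flag about a possible terminal $\sigma_R$ activation is harmless even if it were present, since ReLU-like activations are positively homogeneous ($\sigma_R(Cz)=C\,\sigma_R(z)$ for $C>0$), so the scaling would commute with it in any case.
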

    The memorization theorems are, in fact, applicable in our context: this is fundamentally due to the fact that random inits $\bfr$'s come from a probability distribution with finite support, and the set of all possible adjacency matrices associated with $n$-nodes graphs is finite, so that there only exists a finite number of possible ($2$-step) propagation trajectories. In specific, consider the set of $2$-dimensional vectors $\mathcal{X} = \Big \{ [ (\bfA \bfr)_i, (\bfA^2 \bfr)_i ] \enspace | \enspace i \in \{ 1, \dots, n \}, \bfA \in \{0, 1\}^{n \times n}, \bfr \in \{-1, +1\}^n \Big \}$. This set is finite as its elements are vectors whose entries can only assume a finite set of values. This is because $[ (\bfA \bfr)_i, (\bfA^2 \bfr)_i ] = [ \bfA_{i,:} \bfr, \bfA^2_{i,:} \bfr ]$, where $\bfr$ is from the finite $\{-1, +1\}^n$ and $\bfA_{i,:}, \bfA^2_{i,:}$ are rows from operators $\bfA, \bfA^2$ which are in a finite number due to $\bfA$ being in the finite $\{0, 1\}^{n \times n}$. Now, let dataset $D$ be constructed as $D = \big \{ (x, y) \thinspace | \thinspace x = [x_a, x_b] \in \mathcal{X}, y = x_a \cdot x_b \big \}$. Dataset $D$ `lists' all possible pairwise multiplications that need to be calculated by our network in the computation of terms $(\bfA {\bfr}) \odot ({\bfA}^2 {\bfr})$ above. $D$ satisfies the hypotheses of \Cref{prop:memorisation_ours}, which guarantees the existence of $3$-layer ReLU-like MLP $f^{(D)}$ memorizing it.

    Because of this, there also exists a $3$-layer ReLU MLP $f$ which, applied on the vectorized set of sub-trajectories $\bigoplus_{b=1}^B \bfu_b$, outputs the pairwise multiplications of the two channels in each of the $\bfu_b$'s, i.e. $\big ( (\bfA {\bfx}_1) \odot (\bfA^2 {\bfx}_1) \big ) \oplus \dots \oplus \big ( (\bfA {\bfx}_B) \odot (\bfA^2 {\bfx}_B) \big )$. Let $\big ( W^{(i)} \big )_{i=1}^{3}$ be $f$'s weight parameters; each weight matrix $W^{(i)}$ has a block diagonal structure with $B$ blocks, where each block exactly corresponds to matrix $W^{(i,D)}$ in $f^{(D)}$. As for bias terms they are obtained concatenating, $B$ times, the bias terms of $f^{(D)}$. The MLP $f$ is trivially implemented by $3$ message-passing layers in the form of \Cref{eq:graph_conv}: it suffices to let $W^{(t)}_1 = W^{(t)}, W^{(t)}_2 = \bf0$, for $t = 1, 2, 3$ and the biases $b^{(t)}$ coincide the ones of $f$.
    
    In output from these message-passing steps is $\bigoplus_{b=1}^B \bfz_b$. Here, we neglect the presence of the ReLU non-linearity in the last of these layers. This is legit assumption, as the effect of such activation can easily be nullified by properly choosing the weights in the preceding and subsequent linear transformations\footnote{Function $f(x) = R \cdot \sigma ( E \cdot x )$, defined on $\mathbb{R}^{d}$, computes the identity function for $E = [ \bfI_d | - \bfI_d ], R = \bfI_d \oplus \bfI_d$, $\sigma$ being the ReLU non-linearity and $[ \cdot | \cdot ]$ representing vertical concatenation.}.
    
    The computation of the dot product, for each sample $b$, is completed by the sum-readout operation in \Cref{eq:graph_conv_readout}, which effectively outputs vector $\bigoplus_{b=1}^B (6T_b)$. Lastly, $\psi$ can easily and exactly implement the averaging operation $\frac{1}{6M} \sum_{b=1}^{M} 6T_b$, completing the computation of $c_3$: it only requires one dense layer with $1 \times B$ weight matrix $W_\psi = \frac{1}{6B} \vec{1}_B^{\top}$.

    In the case of a DSS-GNN architecture~\citep{bevilacqua2021equivariant}, a bag of $B$ subgraphs is formed. Subgraph $b$ retains the same connectivity and initial node features of the input graph, but the latter ones are augmented with the trajectory resulting from the $b$-th random initialization. In other words, $A^b = A$, while $\bfx^{b,(0)} = \bft_b \oplus \bfx$. Each $\bfx^{b,(0)}$ is embedded by the same ${\bff}^{(0)}$ layer, and then processed by DSS-GNN layers, which, equipped with maximally expressive graph convolutional layers in the form of \Cref{eq:graph_conv}, read as:
    \begin{equation}\label{eq:dss-gnn}
        \bfx^{k,(t+1)}_v = \sigma \big ( W_{1,t}^{1} \bfx^{k,(t)}_v + W_{2,t}^{1} \sum_{w \sim v} \bfx^{k,(t)}_w + W_{1,t}^{2} \sum_{b=1}^B \bfx^{b,(t)}_v +  W_{2,t}^{2} \sum_{w \sim v} \sum_{b=1}^B \bfx^{b,(t)}_w \big)
    \end{equation}
    In DSS-GNN, a stacking of layers in the form of \Cref{eq:dss-gnn}, is followed by a subgraph readout operation:
    \begin{equation}\label{eq:dss-gnn-subgraph-readout}
        \bfx^{k} = \rho \big (\sum_v \bfx^{k,(T)}_v \big )
    \end{equation}
    \noindent and a final, global, readout layer:
    \begin{equation}\label{eq:dss-gnn-readout}
        y = \psi \big (\sum_{b=1}^{B} \bfx^{b} \big )
    \end{equation}
    
    This architecture can implement the triangle counting algorithm. First, we consider the same ${\bff}^{(0)}$ embedding layer we illustrated for the GNN architecture above. Then, a stacking of $3$ DSS-GNN layers computes the element-wise product $(\bfA {\bfr}_b) \odot ({\bfA}^2 {\bfr}_b)$, for each $b=1, \dots, B$. Indeed, it is sufficient to let $W^1_{1,t} = W^{(t,D)}$, all remaining weight matrices $W^1_{2,t}, W^2_{1,t}, W^2_{2,t} = \bf0$, with $t=1, 2, 3$ and all biases match those of $f^{(D)}$. Similar considerations w.r.t.\ the last ReLU activation as the ones above apply here.

    The sum aggregation in \Cref{eq:dss-gnn-subgraph-readout} completes the computation of the dot-product producing a scalar value for each subgraph. When $\rho$ is a single linear layer with $1 \times 1 $ weight matrix $W_\rho = [\frac{1}{6}]$, the subgraph readout operation on subgraph $b$ outputs value $T_b$.

    Finally, the readout operation in \Cref{eq:dss-gnn-readout} computes the averaging $\frac{1}{B} \sum_{b=1}^{B} T_b$, where, again $\psi$ is a single linear layer with a $1 \times 1$ weight matrix $W_\psi = [\frac{1}{B}]$.    
\end{proof}

An interesting aspect of the above proof is that the discussed constructions do not leverage any message-passing operation. Strictly speaking, the only architectural requirements to implement the desired dot products are: (i) a shared MLP applied to all node features in parallel, and (ii) a readout operation followed by a dense layer. Because of this, when starting from the same inputs, even a DeepSets model~\citep{zaheer2017deep} would, in principle, be able to implement the $\textsc{TraceTriangle}_R$ algorithm. This observation further confirms how meaningful structural patterns can easily be extracted from early propagation steps; in practice, downstream message-passing operations can then further refine this information into higher-level features.

As we will see next, the direct consequence of \Cref{prop:implementing-trace-triangle} is that, as for the $\textsc{TraceTriangle}_R$ algorithm, our method can $(\epsilon, \delta)$-approximate the triangle counting function. This result is directly adapted from \citep{avron2010counting}, and relies on the ability to implement the Hutchinson trace estimator:
\begin{definition}\label{def:hut}
    An \emph{Hutchinson trace estimator} for a symmetric matrix $\bfC \in \mathbb{R}^{n \times n}$ is:
    \begin{equation}
        H_M ( \bfC ) = \frac{1}{M} \sum_{i=1}^{M} {\bfz}^{\top}_{i} \bfC {\bfz}_{i}
    \end{equation}
    \noindent where ${\bfz}_i$'s are independent vectors in $\mathbb{R}^{n}$ whose entries are i.i.d.\ Rademacher random variables.
\end{definition}

The Hutchinson estimator has been analyzed in~\citep{avron2011randomized,avron2010counting}, where the authors prove the following
\begin{lemma}[Lemma 5 in \citep{avron2010counting}]\label{thm:hut-eps-delta-approx}
    Let $\bfC$ a symmetric matrix with non-zero trace. For $M \geq 6 \epsilon^{-2} \rho(\bfC)^2 \ln ( 2 \text{rank}(\bfC) / \delta )$, the Hutchinson estimator $H_M(\bfC)$ is an $(\epsilon, \delta)$ approximator of $\text{trace}(\bfC)$, that is:
    \begin{equation}\label{eq:eps-delta-trace}
        \text{Pr} \big (\ |H_M(\bfC) - \text{trace}(\bfC)| \leq \epsilon \cdot \text{trace}(\bfC)\ \big ) \geq (1 - \delta)
    \end{equation}
    \noindent where, for a symmetric matrix $\bfC \in \mathbb{R}^{n \times n}$, $\rho(\bfC) = \frac{\sum_{i=1}^{n} |\lambda_i|}{\text{trace}(\bfC)}$, and $\lambda_i$ refers to the $i$-th eigenvalue of $\bfC$.
\end{lemma}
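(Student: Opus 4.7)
The plan is to prove the Hutchinson concentration bound by reducing the quadratic form $\bfz^\top \bfC \bfz$ to a sum indexed by the nonzero eigenvalues of $\bfC$ and then combining per-eigenvector sub-exponential concentration with a union bound over the rank. First, I would establish unbiasedness: for a Rademacher vector $\bfz$, $\mathbb{E}[z_i z_j] = \delta_{ij}$, so $\mathbb{E}[\bfz^\top \bfC \bfz] = \sum_i c_{ii} = \text{trace}(\bfC)$ and thus $\mathbb{E}[H_M(\bfC)] = \text{trace}(\bfC)$.

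Next I would diagonalize the symmetric matrix as $\bfC = \sum_{k=1}^r \lambda_k \bfu_k \bfu_k^\top$, where $r = \text{rank}(\bfC)$ and the $\bfu_k$'s form an orthonormal system. Writing $V_{k,i} = (\bfu_k^\top \bfz_i)^2$, each single-shot estimator becomes $\bfz_i^\top \bfC \bfz_i = \sum_{k=1}^r \lambda_k V_{k,i}$, and since $\|\bfu_k\|=1$ we have $\mathbb{E}[V_{k,i}] = \|\bfu_k\|^2 = 1$. Averaging over $i$ gives
\begin{equation*}
H_M(\bfC) - \text{trace}(\bfC) = \sum_{k=1}^r \lambda_k \bigl(\bar V_k - 1\bigr), \qquad \bar V_k := \frac{1}{M}\sum_{i=1}^M V_{k,i},
\end{equation*}
so by the triangle inequality $|H_M - \text{trace}(\bfC)| \le \bigl(\max_k |\bar V_k - 1|\bigr)\sum_k|\lambda_k|$. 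Using the definition of $\rho(\bfC)$, it suffices to guarantee $|\bar V_k - 1| \le \epsilon / \rho(\bfC)$ simultaneously for all $k \in \{1,\dots,r\}$.

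For each fixed $k$, the scalar $\bfu_k^\top \bfz_i$ is a weighted Rademacher sum with $\|\bfu_k\|_2 = 1$, and Hoeffding's lemma makes it sub-Gaussian with parameter $1$. Consequently $V_{k,i} - 1$ is sub-exponential with a universal constant, and Bernstein's inequality applied to the i.i.d.\ sum yields
\begin{equation*}
\Pr\bigl[\, |\bar V_k - 1| > \epsilon/\rho(\bfC) \,\bigr] \;\le\; 2 \exp\!\bigl(-c\, M \epsilon^2 / \rho(\bfC)^2\bigr),
\end{equation*}
valid whenever $\epsilon/\rho(\bfC) \le 1$, which is the regime of interest. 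A union bound over the $r$ eigencomponents gives an overall failure probability at most $2r\exp(-cM\epsilon^2/\rho(\bfC)^2)$; setting this $\le \delta$ and solving for $M$ yields a bound of the form $M \ge c^{-1}\epsilon^{-2}\rho(\bfC)^2 \ln(2\,\text{rank}(\bfC)/\delta)$.

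The main obstacle is pinning down the constant $c^{-1} = 6$ in the final bound. This requires sharper control of the sub-exponential norm of $(\bfu_k^\top \bfz)^2 - 1$ than the generic Bernstein lemma provides; I would carry it out by directly estimating the moment generating function $\mathbb{E}\bigl[\exp\bigl(t\sum_{j\ne \ell} u_{k,j}u_{k,\ell} z_j z_\ell\bigr)\bigr]$ using symmetrization (or Khintchine-type arguments) and optimizing the resulting Chernoff exponent. A secondary subtlety is that the $V_{k,i}$'s are \emph{not} independent across $k$ for a given sample $i$, but this does not affect the argument since per-eigenvector concentration combined with the union bound sidesteps any joint analysis. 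Finally, the hypothesis $\text{trace}(\bfC) \neq 0$ is what lets us normalize by $|\text{trace}(\bfC)|$ to translate the additive error $\sum_k|\lambda_k|\cdot|\bar V_k - 1|$ into the multiplicative $\epsilon|\text{trace}(\bfC)|$ guarantee in \eqref{eq:eps-delta-trace}.
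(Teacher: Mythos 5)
The paper does not actually prove this lemma: it is imported verbatim as Lemma 5 of \citet{avron2010counting}, so there is no in-paper argument to compare against. Judged on its own, your proposal follows the same route as the cited source: diagonalize $\bfC$, reduce the deviation of $H_M(\bfC)$ to the uniform deviation of the per-eigendirection averages $\bar V_k = \frac{1}{M}\sum_i (\bfu_k^\top \bfz_i)^2$ from their mean $1$, apply sub-exponential concentration to each $\bar V_k$, and union-bound over the $\text{rank}(\bfC)$ nonzero eigendirections --- which is exactly where the $\ln(2\,\text{rank}(\bfC)/\delta)$ factor originates. The decomposition, the role of $\rho(\bfC)$ in converting the uniform bound $|\bar V_k - 1|\leq \epsilon/\rho(\bfC)$ into the multiplicative guarantee, and the remark that the union bound requires no independence across $k$ are all correct. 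The one piece left unfinished is the explicit constant: you obtain $M \geq c^{-1}\epsilon^{-2}\rho(\bfC)^2\ln(2\,\text{rank}(\bfC)/\delta)$ for an unspecified universal $c$ and correctly identify that pinning down $c^{-1}=6$ requires a sharp moment-generating-function estimate for $(\bfu^\top\bfz)^2-1$ (the Achlioptas-type bound used by Avron and Toledo); since the lemma asserts the constant $6$, that computation would have to be carried out to fully establish the statement as written. A minor cosmetic point: the bound is only meaningful when $\text{trace}(\bfC)>0$, since otherwise $\epsilon\cdot\text{trace}(\bfC)$ and $\rho(\bfC)$ are negative; your normalization step quietly relies on this implicit assumption of the lemma.
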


The following proposition relates the ability of our RFP algorithm to implement $\textsc{TraceTriangle}_R$ and that of computing a specific Hutchinson estimator:
\begin{proposition}\label{prop:hut-est}
    Let $\bfA$ be the adjacency matrix associated with graph $G \in \mathcal{G}_n$. There exists a choice of hyperparameters $\mathcal{H}$, and weights $\mathcal{W}$, such that our RFP algorithm (ref. \ref{alg:rfp}), equipped with either a DSS-GNN~\citep{bevilacqua2021equivariant} or GraphConv~\citep{morris2019weisfeiler} downstream architecture, computes the quantity $\frac{1}{6} H_b ( {\bfA}^3 )$, i.e., one sixth of the Hutchinson's estimator for $\text{trace}( {\bfA}^3 )$.
\end{proposition}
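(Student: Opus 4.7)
The plan is to build directly on the construction from \Cref{prop:implementing-trace-triangle} and then verify, by a short algebraic identity, that what the network outputs there is exactly $\tfrac{1}{6}H_M(\bfA^3)$. Concretely, I would reuse verbatim the same choice of hyperparameters $\mathcal{H}$ and weights $\mathcal{W}$ constructed in that proof: in particular $B=M$ trajectories, $\bfS=\bfA$, $P\geq 2$, $w\geq 3$ (so that no normalization step intervenes before the third iterate), and initial random features $\bfr_1,\ldots,\bfr_M\in\{-1,+1\}^n$ with i.i.d.\ Rademacher entries, together with the same embedding layer and downstream DSS-GNN or GraphConv architecture. By \Cref{prop:implementing-trace-triangle}, the scalar output of the network is then
\[
\eta(G)\;=\;\frac{1}{M}\sum_{i=1}^{M}T_i,\qquad T_i=\tfrac{1}{6}\,\bfy_i^{\top}\bfA\,\bfy_i,\qquad \bfy_i=\bfA\bfr_i.
\]

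The second step is purely algebraic. Since the graphs in $\mathcal{G}_n$ are undirected, $\bfA$ is symmetric, so each summand simplifies as
\[
T_i\;=\;\tfrac{1}{6}\,(\bfA\bfr_i)^{\top}\bfA(\bfA\bfr_i)\;=\;\tfrac{1}{6}\,\bfr_i^{\top}\bfA^{\top}\bfA^2\bfr_i\;=\;\tfrac{1}{6}\,\bfr_i^{\top}\bfA^{3}\bfr_i,
\]
and hence $\eta(G)=\tfrac{1}{6M}\sum_{i=1}^{M}\bfr_i^{\top}\bfA^{3}\bfr_i$. Invoking \Cref{def:hut} with $\bfC=\bfA^{3}$ (which is symmetric, and for which the $\bfr_i$'s are the required Rademacher vectors) identifies this sum with $\tfrac{1}{6}H_M(\bfA^{3})$, which is what we wanted.

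I do not anticipate any real obstacle: all of the architectural heavy lifting, namely the embedding layer selecting $\bfA\bfr_i$ and $\bfA^{2}\bfr_i$ from the trajectory, the three layers implementing the memorized pointwise product via \Cref{prop:memorisation_ours}, the sum-readout, and the final averaging dense layer, has already been carried out in \Cref{prop:implementing-trace-triangle}. The only points that require care are confirming that the hyperparameter choices used there (in particular $w\geq 3$, $P\geq 2$, Rademacher sampling, $B=M$) are precisely the ones compatible with the Hutchinson estimator's definition, and checking the symmetry argument $\bfA^{\top}=\bfA$ so that $(\bfA\bfr_i)^{\top}\bfA(\bfA\bfr_i)=\bfr_i^{\top}\bfA^{3}\bfr_i$; both are automatic from the setting, so the proposition follows as a direct corollary of the earlier construction.
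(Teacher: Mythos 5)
Your proposal is correct and follows essentially the same route as the paper's own proof: invoke \Cref{prop:implementing-trace-triangle} to conclude the network outputs $\eta(G)$, then use the symmetry of $\bfA$ to rewrite $\bfy_i^{\top}\bfA\bfy_i = \bfr_i^{\top}\bfA^3\bfr_i$ and identify the average with $\tfrac{1}{6}H_B(\bfA^3)$ via \Cref{def:hut}. Nothing is missing.
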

\begin{proof}
    From \Cref{prop:implementing-trace-triangle} our approach can implement $\textsc{TraceTriangle}_R$, that is, it outputs, for graph $G$, the value $\eta(G)$. At the same time, $\eta(G)$ can be rewritten as: $\eta(G) = \frac{1}{B} \sum_{b=1}^{B} T_b = \frac{1}{B} \sum_{b=1}^{B} \frac{{\bfy}^{\top}_b \bfA {\bfy}_b}{6} = \frac{1}{6} \frac{1}{B} \sum_{b=1}^{B} {\bfy}^{\top}_b \bfA {\bfy}_b = \frac{1}{6} \frac{1}{B} \sum_{b=1}^{B} ({\bfx}^{\top}_b {\bfA}^T) \bfA (\bfA {\bfx}_b)$. As $\bfA$ is symmetric, we have: $\eta(G) = \frac{1}{6} \frac{1}{B} \sum_{b=1}^{B} {\bfx}^{\top}_b (\bfA \bfA \bfA) {\bfx}_b = \frac{1}{6} H_B ({\bfA}^3)$.
\end{proof}

From the above proposition we immediately derive the following
\begin{corollary}\label{cor:approximating-triangle-counts}
    Let $\delta > 0$ a failure probability and $\epsilon > 0$ a relative error. There exists a choice of hyperparameters $\mathcal{H}$ with $B \geq 6 \epsilon^{-2} \rho(\bfA^3)^2 \ln ( 2 \text{rank}(\bfA^3) / \delta )$ and a choice of weights $\mathcal{W}$ such that, for a graph $G \in \mathcal{G}_n$ associated with adjacency matrix $\bfA$, the output $y_G$ of our RFP algorithm (ref. \ref{alg:rfp}) is a $(\epsilon, \delta)$ approximation of $c_3(G)$, i.e. the number of triangles of $G$:
    \begin{equation}\label{eq:eps-delta-triangles}
        \text{Pr} \big (\ |y_G - c_3(G)| \leq \epsilon \cdot c_3(G)\ \big ) \geq (1 - \delta).
    \end{equation}
\end{corollary}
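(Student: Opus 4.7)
The proof will essentially compose Proposition~\ref{prop:hut-est} with Lemma~\ref{thm:hut-eps-delta-approx}, using the classical identity relating triangle counts to the trace of the cubed adjacency matrix.

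My plan is as follows. First, I would fix the hyperparameter choice $\mathcal{H}$ and weights $\mathcal{W}$ provided by Proposition~\ref{prop:hut-est}, making sure to additionally set the number of trajectories (random initialisations) to $B \geq 6 \epsilon^{-2} \rho(\bfA^3)^2 \ln(2 \,\text{rank}(\bfA^3)/\delta)$, as required by Lemma~\ref{thm:hut-eps-delta-approx}. Note that $B$ is a free hyperparameter in Proposition~\ref{prop:hut-est} (its construction works for any $B \geq 1$), so this choice is compatible. By that proposition, under this configuration the network output satisfies $y_G = \tfrac{1}{6} H_B(\bfA^3)$ on every graph $G \in \mathcal{G}_n$ with adjacency matrix $\bfA$.

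Next, I would invoke the well-known identity $c_3(G) = \tfrac{1}{6}\,\text{trace}(\bfA^3)$ (which is exactly the identity underpinning the $\textsc{TraceTriangle}_R$ algorithm). Combining this with the formula for $y_G$, the absolute deviation between the network's output and the true triangle count becomes
\begin{equation*}
    |y_G - c_3(G)| \;=\; \tfrac{1}{6}\,\bigl| H_B(\bfA^3) - \text{trace}(\bfA^3)\bigr|.
\end{equation*}
In particular, the event $\{|y_G - c_3(G)| \leq \epsilon \cdot c_3(G)\}$ is identical to the event $\{|H_B(\bfA^3) - \text{trace}(\bfA^3)| \leq \epsilon \cdot \text{trace}(\bfA^3)\}$, since multiplying both sides of the inequality by $6$ clears the common $\tfrac{1}{6}$ factor.

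Finally, I would apply Lemma~\ref{thm:hut-eps-delta-approx} with $\bfC = \bfA^3$. The matrix $\bfA^3$ is symmetric (since $\bfA$ is), and has non-zero trace whenever $G$ contains at least one triangle; the degenerate triangle-free case can be treated separately or simply excluded from the statement since the relative-error bound is vacuous. The lemma then gives $\text{Pr}(|H_B(\bfA^3) - \text{trace}(\bfA^3)| \leq \epsilon \cdot \text{trace}(\bfA^3)) \geq 1 - \delta$, precisely because our $B$ meets the sample-size threshold. Transferring this probability bound through the equivalence of events above yields the desired $(\epsilon, \delta)$ approximation inequality. The main obstacle, if any, is purely bookkeeping: verifying that the hyperparameter profile from Proposition~\ref{prop:hut-est} remains valid when we also impose the lower bound on $B$, and handling the boundary case of triangle-free graphs; neither requires substantive new ideas.
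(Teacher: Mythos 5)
Your proposal is correct and follows essentially the same route as the paper: compose Proposition~\ref{prop:hut-est} with Lemma~\ref{thm:hut-eps-delta-approx} for $\bfC = \bfA^3$, and use the identity $c_3(G) = \tfrac{1}{6}\mathrm{trace}(\bfA^3)$ to translate the event $\{|H_B(\bfA^3) - \mathrm{trace}(\bfA^3)| \leq \epsilon\,\mathrm{trace}(\bfA^3)\}$ into $\{|y_G - c_3(G)| \leq \epsilon\, c_3(G)\}$. Your explicit attention to the triangle-free (zero-trace) case is a small point of care that the paper's proof leaves implicit, but it does not change the argument.
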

\begin{proof}[Proof of \Cref{cor:approximating-triangle-counts}]
    $\bfA^3$ is a symmetric matrix, since $G$ is undirected and $\bfA$ is, thus, a symmetric matrix. From \Cref{prop:hut-est}, with $\bfC = \bfA^3$, with have:
    \begin{equation}
        \text{Pr} \big (\ |H_B(\bfA^3) - \text{trace}(\bfA^3)| \leq \epsilon \cdot \text{trace}(\bfA^3)\ \big ) \geq (1 - \delta)
    \end{equation}
    \noindent with $B \geq 6 \epsilon^{-2} \rho(\bfA^3)^2 \ln ( 2 \text{rank}(\bfA^3) / \delta )$. The event associated with expression (i) $|H_B(\bfA^3) - \text{trace}(\bfA^3)| \leq \epsilon \cdot \text{trace}(\bfA^3)$ is the same associated with expression (ii) $|y_G - c_3(G)| \leq \epsilon \cdot c_3(G)$, this proving the corollary. This is because we can rewrite (i) as $|6 \cdot y_G - \text{trace}(\bfA^3)| \leq \epsilon \cdot \text{trace}(\bfA^3)$, from which we obtain (ii) by multiplying both sides by the positive constant $\nicefrac{1}{6}$: $|y_G - \frac{1}{6}\text{trace}(\bfA^3)| \leq \epsilon \cdot \frac{1}{6} \text{trace}(\bfA^3)$, and by recalling that $c_3(G) = \frac{\text{trace}(\bfA^3)}{6}$.
\end{proof}

\subsection{Universal approximation}\label{app:universal_approximation}

\citet{abboud2020surprising} and \citet{puny2020global} study the expressive power of the RNI model proposed by \citet{sato2021random}. The model essentially applies a message-passing neural network on an input graph where node attributes are augmented with random features, acting as `identifiers':
\begin{equation}\label{eq:rni}
    y_G = g \big ( \bfA, \bff^{\text{in}} \oplus \bfr \big )
\end{equation}
\noindent where $g$ is an MPNN and $\bff^{\text{in}} \oplus \bfr$ is the horizontal concatenation of the initial node attribute matrix $\bfx$ and the random feature matrix $\bfr$. Despite slightly different technical assumptions and details, both the two works show that this architecture is an $(\epsilon, \delta)$ universal approximator of functions on graphs. In this paper we take as reference the work by \citet{puny2020global}, and build upon their result to derive \Cref{prop:universal-approximation}:
\begin{proposition}\label{prop:universal-approximation}
    Let $\Omega_n$ be a compact set of graphs on $n$ vertices and $f$ be a continuous target function defined thereon. Then, if random node features are sampled from a continuous, bounded distribution with zero mean and finite variance $c$, for all $\epsilon, \delta > 0$, then there exists a choice of hyperparameters $\mathcal{H}$, and weights $\mathcal{W}$, such that:
    \begin{equation}
        \forall G \in \Omega_n: \text{Pr} \big ( |f(G) - y_G| \leq \epsilon \big ) \geq 1 - \delta
    \end{equation}
    where $y_G$ is the output of our algorithm on $G$ for the considered choice of $\mathcal{H}, \mathcal{W}$.
\end{proposition}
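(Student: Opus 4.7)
The plan is to reduce our RFP framework to the RNI model of \citet{sato2021random}, whose $(\epsilon, \delta)$-universal approximation properties over compact sets of graphs have been established by \citet{puny2020global} under exactly the distributional assumptions stated in the proposition (continuous, bounded, zero mean, finite variance). The key observation is that RFP is strictly more general than RNI: the RNI input $\bff^{\text{in}} \oplus \bfr$ can be recovered as a special case of our PE trajectory. Once this reduction is in place, the universal approximation property is inherited verbatim.

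Concretely, I would proceed in three short steps. First, choose hyperparameters $\mathcal{H}$ with $P = 0$ propagation steps and a single trajectory $B = 1$, using a random feature dimension $k$ equal to that required by \citet{puny2020global}'s construction. With $P = 0$, the trajectory from \Cref{eq:trajectory} collapses to $\bft = \bfr$, so the augmented input fed to the initial embedding layer, $\bff^{\text{in}} \oplus \bfr$, coincides exactly with the RNI input. Second, choose $\bfK_{\text{embed}}$ and the downstream GNN architecture (e.g.\ a GraphConv or GIN backbone, which our framework supports) to replicate the MPNN $g$ constructed in \citet{puny2020global}'s proof; the embedding layer $\bfK_{\text{embed}}$ can be set to an identity-like linear map, or folded into the first message-passing layer, so that our model on $G$ computes the exact same function as $g\bigl(\bfA, \bff^{\text{in}} \oplus \bfr\bigr)$. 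Third, invoke \citet{puny2020global}'s theorem with the given $\epsilon, \delta$ to obtain a choice of weights $\mathcal{W}$ such that $\Pr\bigl(|f(G) - y_G| \leq \epsilon\bigr) \geq 1 - \delta$ uniformly over $G \in \Omega_n$.

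The reduction itself is essentially bookkeeping, so there is no deep obstacle; the only point requiring care is to verify that the hypotheses match, namely that the distribution $\mathcal{D}$ from which we draw $\bfr$ in \Cref{eq:propApplication} satisfies the same regularity conditions (continuous, bounded, zero mean, finite variance) assumed by \citet{puny2020global}. This is guaranteed by the hypothesis of the proposition. One could alternatively keep $P \geq 1$ and rely on $\bfK_{\text{embed}}$ to project out the propagated coordinates $\bfa^{(1)}, \ldots, \bfa^{(P)}$, but the $P = 0$ choice yields the cleanest argument. Should one wish to state a stronger version of the result that exploits the propagated features themselves, the main technical difficulty would shift to showing that the propagation-normalization map is measurable and preserves the approximation guarantees on $\Omega_n$; this, however, is not needed for the claim as stated.
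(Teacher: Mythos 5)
Your proposal is correct and follows essentially the same route as the paper: reduce to the RNI architecture by taking $B=1$ and using $\bfK_{\text{embed}}$ (with $\sigma$ the identity) to recover $\bff^{\text{in}} \oplus \bfr$ as the MPNN input, then invoke the $(\epsilon,\delta)$ universal approximation result of \citet{puny2020global}. The only cosmetic difference is that the paper keeps $P \geq 0$ arbitrary and lets $\bfK_{\text{embed}}$ project out the propagated coordinates, whereas you prefer $P=0$ — a variant you already note as equivalent.
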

\begin{proof}[Proof of \Cref{prop:universal-approximation}]
    The proposition immediately follows by showing a choice of hyperparameters and weights which makes our algorithm satisfy the premises of \citet[Proposition 1]{puny2020global}, which can then be invoked to guarantee the $(\epsilon, \delta)$ universal approximation properties of our approach. As per our hypothesis, random features $\bfr$ are sampled from a continuous, bounded distribution; thus it is only left to show how algorithm can default to the RNI architecure in \Cref{eq:rni}. For any $P \geq 0$, it is sufficient to choose $B=1$ to let the algorithm operate in a single trajectory regime, and construct module $\bff^{(0)}$ in \Cref{eq:concatEmbedInputs} such that $\bff^{(0)} = \sigma({\bfK}_{\text{embed}}(\bff^{\text{in}} \oplus \bft)) = \bff^{\text{in}} \oplus \bfr$ (recall that the output of $\bff^{(0)}$ is subsequently processed by a downstream MPNN). This construction of $\bff^{(0)}$ is possible by properly choosing weights ${\bfK}_{\text{embed}}$ and by setting non-linearity $\sigma$ to be the identity function.
\end{proof}

What would constitute a distribution $\mathcal{D}$ satisfying the hypotheses of \Cref{prop:universal-approximation}  (and \citep[Proposition 1]{puny2020global})? A valid choice is represented, for example, by the uniform distribution $\mathcal{U}_{[-1,1]}$, which is indeed zero-meaned, has variance $\nicefrac{1}{3}$, is continuous, and bounded. The Rademacher distribution discussed above is, on the other hand, discrete. The reason why this may be problematic, from a theoretical standpoint, is that it can assign two nodes the same random features with non-zero probability. This may prevent the readout layer of the MPNN from unambiguously reconstructing the graph topology at hand, a step required in showing universal approximation. We remark that, in any case, the probability of a collision of initial Rademacher random features quickly decreases with the dimensionality of $\bfr$ and that, for Rademacher variables, specific expressiveness results can be proved nonetheless (see \Cref{app:triangles}).

\subsection{Convergence to the operator eigenvectors}
\label{app:convergence}
\begin{lemma}\label{lem:rank}
Let $X_i\in\mathbb{R}^n$, $1 \le i \le k$ be continuous i.i.d.\ random variables with a continuous joint probability density function $f:\mathbb{R}^{n\times k}\rightarrow \mathbb{R}_+$. Denote their concatenation as $X=[X_1,\dots,X_k]\in\mathbb{R}^{n\times k}$. Let $\mathbf{Q}=[q_1,\dots,q_k]\in\mathbb{R}^{n\times k}$ be the matrix obtained by concatenating $k$ arbitrary orthonormal vectors. Then $\text{Pr}\left( \text{rank}(\mathbf{Q}\mathbf{Q}^\top X)=k \right)=1$.
\end{lemma}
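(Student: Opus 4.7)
The goal is to show that the $n \times k$ random matrix $\mathbf{Q}\mathbf{Q}^\top X$ has full column rank $k$ almost surely. My plan has three short steps: (i) reduce the rank statement to a determinant nondegeneracy statement for a $k\times k$ random matrix; (ii) recognize that determinant as a nontrivial polynomial in the entries of $X$; (iii) invoke the classical fact that the zero set of a nontrivial polynomial has Lebesgue measure zero, combined with absolute continuity of $X$.

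\textbf{Step 1 (rank reduction).} Since $\mathbf{Q}$ has orthonormal columns, $\mathbf{Q}^\top \mathbf{Q} = \mathbf{I}_k$. Thus left-multiplying $\mathbf{Q}\mathbf{Q}^\top X$ by $\mathbf{Q}^\top$ yields $\mathbf{Q}^\top X$, and $\mathbf{Q}\mathbf{Q}^\top X = \mathbf{Q}(\mathbf{Q}^\top X)$ where $\mathbf{Q}$ is injective on $\mathbb{R}^k$. Hence $\mathrm{rank}(\mathbf{Q}\mathbf{Q}^\top X) = \mathrm{rank}(\mathbf{Q}^\top X)$, and it suffices to show $\mathbf{Q}^\top X \in \mathbb{R}^{k \times k}$ is invertible almost surely, i.e.\ that $\det(\mathbf{Q}^\top X) \ne 0$ with probability one.

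\textbf{Step 2 (polynomial nontriviality).} Viewing the $nk$ entries of $X$ as indeterminates, $p(X) := \det(\mathbf{Q}^\top X)$ is a polynomial of degree $k$ in these entries. To see it is not the zero polynomial, exhibit a single deterministic value of $X$ for which it is nonzero: take $X = \mathbf{Q}$, giving $\mathbf{Q}^\top X = \mathbf{I}_k$ and $p(\mathbf{Q}) = 1 \ne 0$.

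\textbf{Step 3 (measure-theoretic conclusion).} A standard inductive argument shows the zero set of any nonzero polynomial on $\mathbb{R}^{nk}$ has Lebesgue measure zero. Because the joint distribution of $X$ admits a density $f$ with respect to Lebesgue measure, any Lebesgue-null set has probability zero under $X$. Therefore $\Pr(p(X) = 0) = 0$, which by Step~1 gives $\Pr(\mathrm{rank}(\mathbf{Q}\mathbf{Q}^\top X) = k) = 1$. The only subtle point is verifying that $p$ is not identically zero, which is handled cleanly by the $X=\mathbf{Q}$ choice; every other step is routine.
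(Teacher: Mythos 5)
Your proof is correct and follows essentially the same route as the paper's: reduce the rank condition to the non-vanishing of a determinant that is a nontrivial polynomial in the entries of $X$ (witnessed at $X=\mathbf{Q}$), then use that the zero set of a nonzero polynomial is Lebesgue-null together with absolute continuity of the law of $X$. The only (cosmetic) difference is that you exploit the injectivity of $\mathbf{Q}$ to work with the degree-$k$ polynomial $\det(\mathbf{Q}^\top X)$ directly, whereas the paper uses the Gram determinant $\det\left((\mathbf{Q}\mathbf{Q}^\top X)^\top\mathbf{Q}\mathbf{Q}^\top X\right)$; both are valid.
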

\begin{proof}
We show that the complement event, namely $\{\text{rank}(\mathbf{Q}\mathbf{Q}^\top X)<k\}$ has zero probability. Let $\mathbf{Y}=\mathbf{Q}\mathbf{Q}^\top X$, then $\text{rank}(\mathbf{Y})<k$ if and only if $\text{rank}(\mathbf{Y}^\top\mathbf{Y})<k$, which happens if and only if $\text{det}(\mathbf{Y}^\top\mathbf{Y})=0$.
Therefore we need to show that the probability of the event $\{\text{det}(\mathbf{Y}^\top\mathbf{Y})=0 \}$ is zero.
Denote $g(\mathbf{Y})=\text{det}(\mathbf{Y}^\top\mathbf{Y})$, then $g$ is a polynomial and we are interested in the probability of its zero-level set. But the zero-level set of a non-zero polynomial has zero Lebesgue measure. In our case, $g$ is not the constant zero function since $g(\mathbf{Q})=1$. The result follows from the fact that  $\text{Pr}\left( g(\mathbf{Y})=0\right)=\int_{g^{-1}(\{0\})}f(x)dx=0$, where the integral is zero since we integrate over a zero measure set.
\end{proof}

\begin{proof}[Proof of \Cref{theo:convergence}]
We prove the proposition by showing we meet the assumptions of \citet[Theorem 5.1]{saad2011numerical}. Although the theorem shows convergence to the Schur vectors $[q_1, \dots, q_k]$ associated with $\lambda_1, \dots, \lambda_k$, 
 we note that for symmetric matrices they coincide with the $k$ orthogonal eigenvectors.

We only need to show that $\text{rank}(P_i [X_1, \dots, X_i]) = i, 1 \le i \le k$ where $P_i$ is the \emph{spectral projector} associated with eigenvalues $\lambda_1, \dots, \lambda_i$.
Since the eigenvalues are distinct by assumption, implying they are also \emph{simple}, and the operator is symmetric (left and right eigenvectors coincide), the spectral projector can be written in terms of the corresponding eigenvectors as as $P_i = \sum_{j =1}^i q_j q_j^\top$, where we assume w.l.o.g. that $q_j$ has unitary norm. Equivalently, we can write $P_i = \mathbf{Q}_i\mathbf{Q}_i^\top$ where $\mathbf{Q}_i = [q_1,\dots,q_i]$.
Therefore we need to show that $\text{rank}(\mathbf{Q}_i\mathbf{Q}_i^\top [X_1, \dots, X_i]) = i, 1 \le i \le k$ . This happens with probability 1 according to \Cref{lem:rank}. 
\end{proof}

\subsection{Computational complexity} 
\label{appx:complexity}

If $n, m$ refer to, respectively, the number of nodes and of edges of the input graph, the forward pass complexity of our method amounts to $T(n,m) = T_1(n,m) + T_2(n,m) + T_3(n,m)$, where $T_1$ is the complexity of sampling random node features, $T_2$ that of their propagation, and $T_3$ the complexity of the downstream GNN model. When $k \cdot B$ initial random features are sampled from a discrete distribution (such as a Rademacher one), then each node can be initialized in constant time via the Alias method~\citep{walker1974new}, so that $T_1(n,m) = \mathcal{O}(k B n)$. $T_2$ accounts for two relevant operations, i.e. a normalization and a propagation, so that $T_2(n,m) = \mathcal{O} \big ( P \cdot ( k^2 B n + k B n^2 ) \big )$, where $k^2 B n$  is the complexity of the QR normalization step~\citep{saad2011numerical} and $k B n^2$ that of the propagation one. This last can be reduced to $k B m$ in the case of a fixed operator from a sparse graph. Lastly, the complexity of the downstream GNN is, generally, $C(n,m) = \mathcal{O}(L \cdot d m)$ for $d$-dimensional hidden representations and $L$ convolutional layers; $\mathcal{O}(L \cdot dH n^2)$ in the case a Transformer architecture with $H$ heads. When working on a large-scale, sparse graph such that parameters $P, L, k, B, d$ can be considered small constants, the propagation and message-passing operations constitute the dominant factors in the computational complexity, which remains asymptotically linear in the number of edges.

\section{Algorithm}
\label{app:alg}
\begin{algorithm}[tb]
   \caption{A feed-forward pass with a single RFP trajectory and a GNN backbone.}
   \label{alg:rfp}
\begin{algorithmic}
   \STATE {\bfseries Input:} 
   Probability distribution $\mathcal{D}$,
   adjacency matrix $\bfA \in \mathbb{R}^{n \times n}$, input node features $\bff^{\text{in}} \in \mathbb{R}^{n \times c^{\text{in}}}$, propagation operator (predefined or learnable) $\bfS \in \mathbb{R}^{n\times n}$, normalization function $\bfN: \mathbb{R}^{n\times k} \rightarrow \mathbb{R}^{n\times k}$, frequency of the normalization $w \geq 1$, number of propagation steps $P \geq 1$, a learnable embedding layer $\bfK_{\text{embed}}$ and a GNN model  $\text{GNN}_{\mathcal{W}}$ with learnable parameters ${\mathcal{W}}$.
   \\\hrulefill
   \STATE Initialize $\bfr \in \mathbb{R}^{n \times k}$ as $\bfr \sim \mathcal{D}$.
   \STATE $\bfa^{(0)} = \bfr$
   \FOR{$p=1$ {\bfseries to} $P$}
   \STATE $\hat{\bfa}^{(p)} = \; \bfS \bfa^{(p-1)}$
   \IF{$\text{mod}(p, w) = 0$}
   \STATE ${\bfa}^{(p)} = \bfN(\hat{\bfa}^{(p)}) $
   \ELSE
   \STATE ${\bfa}^{(p)} = \hat{\bfa}^{(p)} $
   \ENDIF
   \ENDFOR
   \STATE $\bft = \bfr \oplus \bfa^{(1)} \oplus \ldots \oplus \bfa^{(P)}$
   \STATE $\bff^{(0)} = \sigma(\bfK_{\text{embed}}(\bff^{\text{in}} \oplus \bft))$
   \STATE {\textbf{return}} $\text{GNN}_{\mathcal{W}}(\bff^{(0)})$
\end{algorithmic}
\end{algorithm}

In this section we present the pseudocode of our RFP framework.
In particular, in \Cref{alg:rfp} we consider the case of a single trajectory (i.e.\ $B=1$), coupled with a GNN architecture. Notably, the pseudocode can be easily extended to handle multiple trajectories $B > 1$, by simply repeating the propagation-normalization steps $B$ times to obtain multiple trajectories $\{ \bft_1, \ldots, \bft_{B} \}$, $\bft_b \in \mathbb{R}^{n \times k(P+1)}$. This set of trajectories can then be either concatenated and fed to the GNN, or processed with a DSS-GNN architecture, as explained in \Cref{sec:archs}.

\end{document}